%% file: arxiv_v1.tex
\documentclass{article}

\usepackage{microtype}
\usepackage{graphicx}
\usepackage{subcaption}
\usepackage{booktabs} %
\usepackage{multirow} %
\usepackage{tcolorbox}
\usepackage[table]{xcolor}
\definecolor{baselinebg}{RGB}{255,240,230} %
\definecolor{refbg}{RGB}{235,245,255}    %
\definecolor{reprobg}{RGB}{245,245,245}      %
\definecolor{default}{RGB}{215,215,215}      %

\usepackage{hyperref}

\usepackage[preprint]{icml2026}

\usepackage{amsmath}
\usepackage{amssymb}
\usepackage{mathtools}
\usepackage{amsthm}
\usepackage{listings}

\usepackage[capitalize,noabbrev]{cleveref}

\theoremstyle{plain}
\newtheorem{theorem}{Theorem}[section]
\newtheorem{proposition}[theorem]{Proposition}

\newtheorem{corollary}[theorem]{Corollary}
\theoremstyle{definition}
\newtheorem{definition}[theorem]{Definition}
\newtheorem{assumption}[theorem]{Assumption}
\theoremstyle{remark}
\newtheorem{remark}[theorem]{Remark}

\usepackage[textsize=tiny]{todonotes}

\newcommand{\MyTitle}{Saliency-Aware Multi-Route Thinking: Revisiting Vision-Language Reasoning}
\newcommand{\runningTitle}{Saliency-Aware Multi-Route Thinking: Revisiting Vision-Language Reasoning}

\icmltitlerunning{\runningTitle}

\begin{document}

\twocolumn[
  \icmltitle{\MyTitle}

  \icmlsetsymbol{equal}{*}

  \begin{icmlauthorlist}
    \icmlauthor{Mingjia Shi}{uva}
    \icmlauthor{Yinhan He}{uva}
    \icmlauthor{Yaochen Zhu}{uva}
    \icmlauthor{Jundong Li}{uva}
  \end{icmlauthorlist}

  \icmlaffiliation{uva}{University of Virginia}

  \icmlcorrespondingauthor{Jundong Li}{jundong@virginia.edu}

  \icmlkeywords{Machine Learning, ICML}

  \vskip 0.3in
]

\input{command}

\printAffiliationsAndNotice{}  %
\begin{abstract}

\input{sec/0_abs/abs_v9}

\end{abstract}
\input{sec/1_intro/fig/fig_intro_intuition}
\input{sec/1_intro/fig/fig_intro}
\section{Introduction}
\input{sec/1_intro/intro_v6}
\section{Problem Formulation}
\input{sec/2_mtd/sec/prob_modeling_v1}

\section{Methodology}
\input{sec/2_mtd/sec/mtd_only_v2}

\input{sec/2_mtd/mtd_v2}

\section{Empirical Study}
\input{sec/3_exp/exp_v1}

\section{Related Works}
\input{sec/4_rlt/rlt_v1}
\section{Conclusions}
\input{sec/5_sum/sum_v1}

\newpage
\section*{Impact Statement}
\input{sec/6_appdx/impact_statement}
\bibliography{example_paper}
\bibliographystyle{icml2026}

\newpage
\appendix
\onecolumn
\input{sec/6_appdx/appdx_v1}

\end{document}

%% file: command.tex
\newcommand{\tablestyle}[2]{\setlength{\tabcolsep}{#1}\renewcommand{\arraystretch}{#2}\centering\footnotesize}
\newcommand\shline{\noalign{\global\savewidth\arrayrulewidth
  \global\arrayrulewidth 1pt}\hline\noalign{\global\arrayrulewidth\savewidth}}
\newcommand{\red}[1]{{\color{red}#1}}
\newcommand{\TODO}[1]{\textbf{\color{red}[TODO: #1]}}

%% file: sec/0_abs/abs_v9.tex
Vision-language models (VLMs) aim to reason by jointly leveraging visual and textual modalities.
While allocating additional inference-time computation has proven effective for large language models (LLMs), achieving similar scaling in VLMs remains challenging.
A key obstacle is that visual inputs are typically provided only once at the start of generation, while textual reasoning (\textit{e.g.}, early visual summaries) is generated autoregressively, causing reasoning to become increasingly text-dominated and allowing early visual grounding errors to accumulate (Fig~\ref{fig:intro_intuition}).
Moreover, vanilla guidance for visual grounding during inference is often coarse and noisy, making it difficult to steer reasoning over long texts.
To address these challenges, we propose \emph{Saliency-Aware Principle} (SAP) selection.
SAP operates on high-level reasoning principles rather than token-level trajectories, which enable stable control over discrete generation under noisy feedback while allowing later reasoning steps to re-consult visual evidence when renewed grounding is required.
In addition, SAP supports multi-route inference, enabling parallel exploration of diverse reasoning behaviors.
SAP is model-agnostic and data-free, requiring no additional training.
Empirical results show that SAP achieves competitive performance, especially in reducing object hallucination, under comparable token-generation budgets while yielding more stable reasoning and lower response latency than CoT-style long sequential reasoning.

%% file: sec/1_intro/fig/fig_intro_intuition.tex
\begin{figure}[th]
    \centering
    \includegraphics[width=\linewidth]{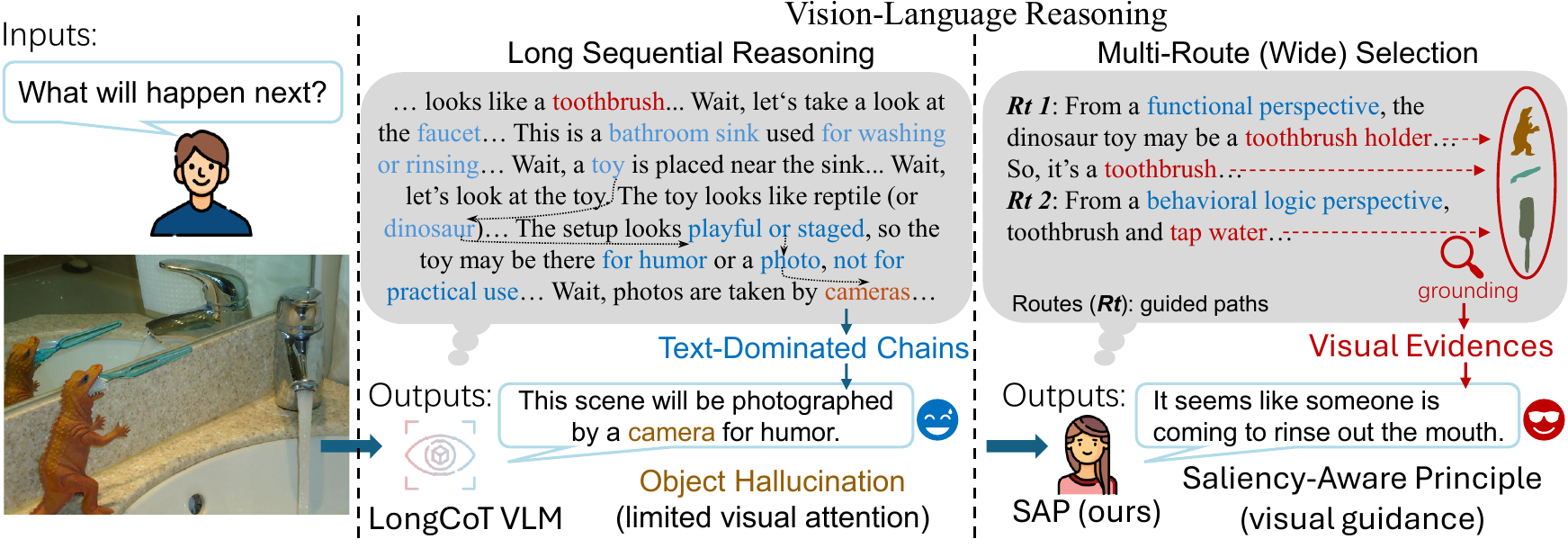}
    \vspace{-15pt}
    \caption{Text reliance in vision-language reasoning. Our SAP, a multi-route approach, employs visual grounding as guidance, alleviating object hallucination in text-dominated LongCoT.}
    \vspace{-15pt}
    \label{fig:intro_intuition}
\end{figure}

%% file: sec/1_intro/fig/fig_intro.tex
\begin{figure*}[th]
    \centering
    \includegraphics[width=\linewidth]{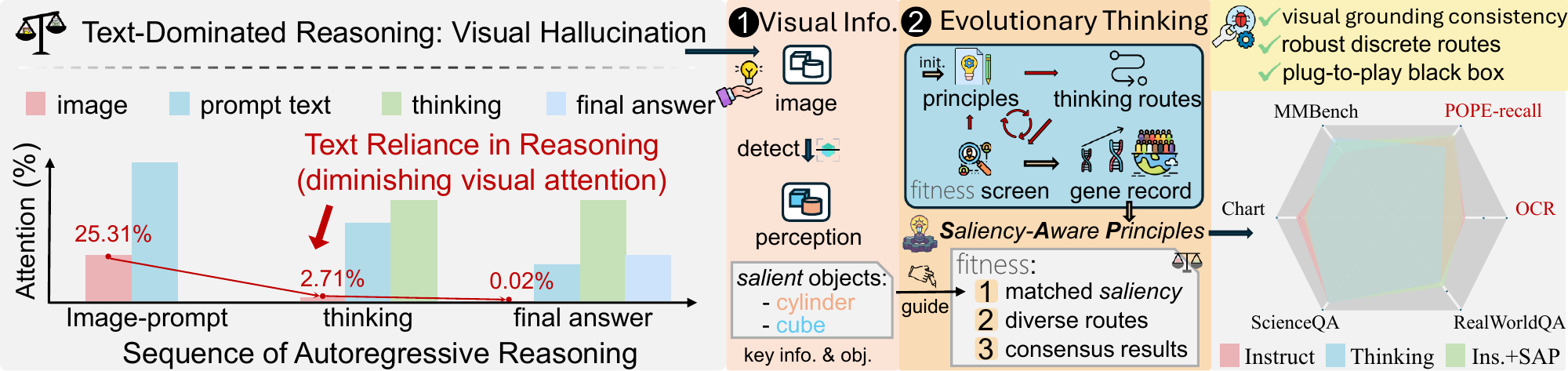}
    \vspace{-18pt}
    \caption{\textit{Better perception} (Right) is obtained by SAP (Middle) evaluated by perception-intensive benchmarks, OCRVQA and POPE-recall, about object hallucination. Left: Text reliance in reasoning from Qwen3-VL-8B-Thinking on MS-COCO. Right: Saliency-Aware Principle (SAP) selection pipeline and empirical comparison between SAP implementation with Qwen3-VL-8B serie on benchmarks. }
    \vspace{-15pt}
    \label{fig:intro}
\end{figure*}

%% file: sec/1_intro/intro_v6.tex
Vision–language models (VLMs) aim to solve multimodal reasoning tasks by jointly processing visual inputs (e.g., images or regions) and textual inputs (e.g., instructions or intermediate thoughts) ~\citep{liu2025inference}.  
Recent advances in large language models (LLMs) have shown that allocating additional inference-time computation—such as generating longer reasoning sequences or exploring multiple reasoning routes—can often improve reasoning quality ~\citep{wei2022chain,wang2022self}.  
This capability, commonly referred to as \emph{inference-time scaling}, has become a central mechanism for enhancing LLM-based reasoning systems ~\citep{liu2025inference}.  
Nevertheless, whether similar inference-time scaling can be achieved in VLMs remains an open question, despite growing interest in multimodal chain-of-thought (COT) reasoning research ~\citep{wang2025multimodal}.

Traditional inference-time scaling relies on the ability of LLMs to iteratively refine reasoning over long inference horizons, where intermediate reasoning states are revisited, corrected, and extended as generation proceeds ~\citep{wei2022chain,yao2023tree}.  
In the multimodal setting, this requires not only producing longer textual reasoning chains but also continuously incorporating and re-evaluating visual evidence throughout inference.  
In contrast to language-only reasoning, where intermediate representations remain in the same textual modality, vision–language reasoning generally require repeatedly align textual reasoning states with visual grounding ~\citep{wang2025multimodal}.  Consequently, effective scaling critically depends on the revisiting of visual information rather than a one-time visual summary (which could be viewed as extrapolated visual information that may accumulate errors) at the beginning of the reasoning process.

However, such high quality visual grounding is difficult to achieve due to a systematic discrepancy between the textual and visual modalities during the autoregressive generation process.  
Textual representations are explicitly generated and updated at every decoding step, whereas visual information is typically incorporated through fixed encodings or limited cross-modal interactions ~\citep{lu2019vilbert}.  
As generation length increases, this discrepancy causes the reasoning process to become increasingly text-dominated, a phenomenon that has been empirically linked to hallucination and biased visual reasoning in prior work ~\citep{hendricks2018women,rohrbach2018object}.  
While it may appear sufficient to summarize visual content early and rely on the text afterwards~\citep{wang2025multimodal,zhao2025r1}, such summarization is inherently lossy: \emph{Omissions or misinterpretations introduced during early summarization cannot be corrected by later reasoning steps}.  
Consequently, inference-time scaling amplifies early visual grounding errors, leading to a reasoning route that drifts away from the underlying visual evidence (Fig.~\ref{fig:intro}).

A general approach to mitigating this issue is to introduce supervision signals during inference that encourage the model to attend to underutilized vision modalities ~\citep{zhao2024mitigating}.  
However, providing effective guidance in vision–language reasoning presents fundamental challenges.  
In multimodal tasks, due to the subjectivity of evaluators and annotators, supervision signals often reflect inconsistent and implicit evaluation principles regarding how visual evidence should be used, leading to noisy and difficult-to-calibrate feedback ~\citep{liu2025inference}.  
Moreover, inference-time reasoning unfolds through discrete textual generation processes, where guidance signals are inherently coarse, making them difficult to propagate to intermediate multimodal reasoning trajectories~\citep{zhu2025reasoning}.  
Together, these factors render stable and general inference-time optimization particularly challenging in the multimodal setting.

To address these challenges, we introduce \emph{Saliency-Aware Principle Selection} (SAP) for vision–language reasoning.  
SAP adopts a model-agnostic, black-box formulation of test-time scaling that leverages visual saliency as a modality-aware guidance signal, enabling consistent and high-quality utilization of informative visual evidence \textit{throughout the inference process} rather than relying on a one-time visual summary.  
By operating on high-level reasoning principles instead of token-level trajectories, SAP mitigates the accumulation of early visual grounding errors and prevents long-horizon reasoning from drifting toward text-only states.  
Moreover, performing inference-time optimization in a discrete principle space allows SAP to remain robust to coarse, evaluator-dependent supervision signals, avoiding reliance on fine-grained trajectory scoring or task-specific heuristics.

Empirically, we find that SAP achieves performance comparable to strong proprietary VLMs under similar token budgets, without additional training or new data.  
By reallocating inference-time computation from single-route long chain-of-thought (LongCoT)~\citep{chen2025towards} to multi-route, principle-guided exploration, SAP achieves performance gains previously attainable with computationally inefficient deeper reasoning.  
Moreover, multi-route inference enables parallel execution: unlike LongCoT reasoning, where later tokens strictly depend on earlier ones, SAP allows multiple reasoning routes to be instantiated and evaluated independently.  
This property supports flexible load balancing across parallel model instances, yielding higher throughput and lower latency in large-scale deployment. The contributions are summarized as follows:
\begin{itemize}
\item We emphasize a known tendency of VLMs toward text-dominated reasoning and demonstrate that, in long inference, it can also hinder reasoning by limiting the ability to revisit visual evidence at later reasoning stages of single reasoning route.
\item We propose \emph{Saliency-Aware Principle Selection} (SAP), a model-agnostic and data-free approach that promotes better usage of visual evidence.
\item We demonstrate that SAP achieves inference-time scaling under competitive token budgets and performance. By enabling parallel evaluation of multiple reasoning routes, SAP achieves lower response latency than LongCoT-style sequential reasoning.
\end{itemize}

%% file: sec/2_mtd/sec/prob_modeling_v1.tex
In this paper, we formulate inference-time scaling as an optimization problem over reasoning routes executed during the inference phase.
Given a prompt $p$ and a language model $\varphi$, a reasoning route 
$\rho \in \mathcal{R}$ specifies how inference unfolds across steps, where $\mathcal{R}$ denotes the space of feasible reasoning routes. Route $\rho$ determining the structure and progression of intermediate reasoning.
The objective of inference-time scaling is to identify a reasoning route that maximizes task utility:
\vspace{-5pt}
\begin{equation}
\label{eq:obj}
\rho^\star = \arg\max_{\rho \in \mathcal{R}} \; f(\rho),
\end{equation}
where simplified $f(\rho)\leftarrow f(\rho;p,\varphi)$ evaluates the utility of the outputs induced by $\rho$. Here, inference-time scaling does not refer solely to generating a longer chain of reasoning along a single trajectory.
Instead, it refers to allocating additional inference-time computation to \emph{explore multiple alternative reasoning routes} and to select higher-quality ones.
As the inference budget increases, more candidates in $\mathcal{R}$ can be instantiated and evaluated, improving the likelihood of identifying a reasoning route with higher utility.
In this sense, inference-time scaling corresponds to scaling the \emph{search over reasoning behaviors} rather than merely extending a single route.

%% file: sec/2_mtd/sec/mtd_only_v2.tex
To achieve the above objective, we propose \emph{Saliency-Aware Principle Selection} (SAP), an inference-time optimization approach for vision--language reasoning.
SAP is composed of three key components: \textbf{\textit{(i)}} principle-guided reasoning generation, which parameterizes reasoning behaviors using high-level principles; \textbf{\textit{(ii)}} evolutionary principle refinement, which performs population-based selection under ordinal and noisy feedback; and \textbf{\textit{(iii)}} saliency-aware evaluation, which provides a stable modality-aware signal for comparing candidate principles.
An overview of the SAP approach is illustrated in the right part of Fig.~\ref{fig:intro}.

\subsection{Principle-Guided Reasoning Generation}

\subsubsection{Intractable Route Search}
Direct optimization of the task utility $f$ over the reasoning route space $\mathcal{R}$ as shown in Eq. (\ref{eq:obj}) to achieve test-time scaling for VLM is intractable due to both theoretical and practical reasons.
First, we note that a reasoning route is not a fixed-length variable but the emergent output of autoregressive generation under the language model $\varphi$.
As inference-time computation is scaled, reasoning routes become longer and increasingly sparse, making direct sampling and evaluation more time-consuming in practice. In addition, the space $\mathcal{R}$ is inherently discrete and redundant.
Small local modifications to token sequences often do not lead to meaningful changes in the final reasoning outcome, while semantically distinct reasoning behaviors may be separated by large combinatorial gaps.
Therefore, information in the route space is sparsely distributed: many distinct routes induce similar final outputs, while high-quality routes are rare and difficult to identify.
This discreteness and sparsity make it difficult to efficiently compare, rank, or refine reasoning routes, rendering direct optimization over $\mathcal{R}$ impractical.
To make this problem amenable to optimization, we parameterize the routes using a set of high-level \emph{reasoning principles}.

\subsubsection{Principle-guided reasoning}

We start by defining \emph{principle reasoning} as a form of inference in which the reasoning process is guided by explicit, high-level principles that specify \emph{how} the model should reason, rather than the exact intermediate tokens to be generated.
Compared to a specific route, a reasoning principle provides guidance on reasoning behavior—such as how visual evidence should be revisited during the generation process, how intermediate hypotheses should be verified, or how different modalities should be balanced—without constraining to a particular token-level realization.

For example, in a vision–language reasoning task, a principle may instruct the model to \emph{explicitly re-examine visual evidence whenever an intermediate textual conclusion is formed} or to \emph{verify spatial relations by checking the relative positions of all relevant objects in the image}.
We note that such a principle $x$ is itself a compact textual description, but it does not prescribe a single reasoning trace.
Instead, under a fixed language model, the same principle $x \in \mathcal{X}$ may induce multiple concrete reasoning trajectories, depending on how the model instantiates the principle at the token level.
We denote each such instantiation as a reasoning route $\rho(x)$ given principle $x$.
This parameterization yields the equivalent optimization objective
\vspace{-5pt}
\begin{equation}
\label{equ:formulation}
x^\star = \arg\max_{x \in \mathcal{X}} \; f\big(\rho(x)\big),
\end{equation}
which shifts inference-time optimization from unstructured reasoning routes to a more compact principle space.

\subsubsection{Principle Selection}
Principle-guided reasoning shifts inference-time control from token-level trajectories to high-level reasoning directives.
This shift raises a central question: \emph{How should reasoning principles be selected and refined during inference?}

In VLMs, principle selection is challenging for two reasons closely tied to multimodal inference.
First, during long-horizon generation, visual evidence becomes increasingly difficult to revisit, making it necessary to favor principles that explicitly preserve visual grounding throughout reasoning.
In addition, feedback available at inference time is inherently discrete and ordinal, providing only relative comparisons between reasoning behaviors (\textit{e.g.}, preferring A to B) rather than forcing them into scalar scores.

These properties indicate that principle selection should operate directly over discrete candidates, rely on relative comparisons instead of absolute scores, and remain robust under noisy and inconsistent feedback.
Motivated by these requirements, we adopt a population-based evolutionary selection strategy to refine reasoning principles.

%% file: sec/2_mtd/mtd_v2.tex
\input{sec/2_mtd/alg/alg_init_v2}
\input{sec/2_mtd/tbl/tbl_symbol}
\input{sec/2_mtd/alg/alg_evo_v2}
\subsection{Saliency-Aware Principle Evolution for Vision--Language Reasoning}
\label{sec:sap_framework}
\input{sec/2_mtd/sec/framework_SAP_v1}
\subsection{Framework Analysis}
\input{sec/2_mtd/sec/analysis}

%% file: sec/2_mtd/alg/alg_init_v2.tex
\begin{algorithm}[t]
\caption{Initialization: sample principles and $\tau$ routes}
\label{alg:init}
\begin{algorithmic}[1]
\REQUIRE images $\mathcal{I}$, prompt $p$, \#init. $N$, \#routes $\tau$
\STATE $\mathcal{O} \leftarrow \textsc{Ground}(\mathcal{I})$, $x \leftarrow [\,]$
\FOR{$i=1$ \textbf{to} $N$}
    \STATE $x_i \leftarrow \varphi_x(p, \mathcal{O})$ \COMMENT{sample one principle}
    \STATE $r_i, s_i \leftarrow [\,], [\,]$
    \FOR{$j=1$ \textbf{to} $\tau$}
        \STATE $r_{i,j}, s_{i,j} \leftarrow \varphi_{rs}(p, \mathcal{I}, x_i)$
        \STATE $\textsc{append}(r_i, r_{i,j});\;\textsc{append}(s_i, s_{i,j})$
    \ENDFOR
    \STATE $\textsc{append}(x, x_i)$
\ENDFOR
\STATE \textbf{return} $x,\;\{r_{i,j}, s_{i,j}\}_{i\in[N],\,j\in[\tau]}$
\end{algorithmic}
\end{algorithm}
\vspace{-10pt}

%% file: sec/2_mtd/tbl/tbl_symbol.tex
\begin{table}[t]
\tablestyle{1pt}{1.3}
\centering
\small
\vspace{-10pt}
\caption{Notation: \textit{\textbf{S}aliency-\textbf{A}ware \textbf{P}rinciple Selection} (SAP).}
\vspace{-8pt}
\begin{tabular}{c l}
\hline
$p$, $x$ & prompt [input], reasoning principle [variable] \\
$r$, $s$ & reasoning [thinking] and summary [output] \\
$\mu$, $\lambda$, $\tau$ & \#elites, \#new individuals, \#routes(width) [hyperparam.] \\

$\varphi$, $f$ & LLM, utility and fitness metric on ($r$, $s$) [those given] \\
\hline
\end{tabular}
\vspace{-10pt}
\label{tab:sap-notation}
\end{table}

%% file: sec/2_mtd/alg/alg_evo_v2.tex
\begin{algorithm}[t]
\caption{$(\mu+\lambda)$ Evolution: select elites and resample}
\label{alg:evolve}
\begin{algorithmic}[1]
\REQUIRE principles $x$, images $\mathcal{I}$, prompt $p$, \#routes $\tau$
\STATE $N \leftarrow |x|$ \COMMENT{$N=\mu+\lambda$}, $f \leftarrow [\,]$
\FOR{$i=1$ \textbf{to} $N$}
    \STATE $r_i, s_i \leftarrow \textsc{MultiRouteInfer}(p, \mathcal{I}, x_i, \tau)$
    \STATE $\ell_i \leftarrow \textsc{DiscreteEval}(r_i, s_i)$
    \STATE $f_i \leftarrow \textsc{MapLevelToScore}(\ell_i)$ \COMMENT{internal only}
    \STATE $\textsc{append}(f, f_i)$
\ENDFOR
\STATE $\texttt{idx} \leftarrow \textsc{TopK}(f, k=\mu)$
\STATE $x^{\mu} \leftarrow x[\texttt{idx}]$
\STATE $x^{\lambda} \leftarrow [\,]$
\FOR{$n=1$ \textbf{to} $N-\mu$}
    \STATE $x_n \leftarrow \varphi_x(p, x^{\mu})$ \COMMENT{offspring from elites}
    \STATE $\textsc{append}(x^{\lambda}, x_n)$
\ENDFOR
\STATE $x^{+} \leftarrow \textsc{concat}(x^{\mu}, x^{\lambda})$
\STATE \textbf{return} $x^{+}$ new population of principles
\end{algorithmic}
\end{algorithm}

%% file: sec/2_mtd/sec/framework_SAP_v1.tex
Built on the principle-level formulation in Eq.~(\ref{equ:formulation}), we form inference-time scaling in VLMs as a saliency-guided evolutionary optimization problem over reasoning principles.
The proposed approach unifies three key components:
\textbf{\textit{(i)}} principle-level reasoning as the discovery of routes, 
\textbf{\textit{(ii)}} visual saliency as the modality-aware grounding signal, and
\textbf{\textit{(iii)}} population-based evolutionary search as the optimization mechanism.
Together, these components provide a practical and robust approach to inference-time scaling under discrete generation and noisy multimodal feedback.

\subsubsection{Approach overview}
Given an input image $\mathcal{I}$ and a prompt $p$, we first extract a set of salient visual elements
\(
\mathcal{O} = \textsc{Ground}(\mathcal{I}),
\)
which identify image regions likely to be relevant for reasoning.
These salient elements serve as a modality-aware reference for evaluation rather than being injected directly into the language model.
Conditioned on the prompt $p$ and the salient elements $\mathcal{O}$, we initialize a population of reasoning principles
\(
\{x_i\}_{i \in [\mu+\lambda]},
\)
where each principle specifies high-level guidance on how visual evidence should be utilized.

For each principle $x_i$, the language model $\varphi$ performs inference, producing multiple routes' reasoning outputs and summaries ($r$ and $s$, ($\tau$: \#routes))
\[
(r_{i,j}, s_{i,j}) \sim \varphi(r, s \mid x_i, p, \mathcal{I}), \quad j \in [\tau],
\]
which instantiate the reasoning route $\rho(x_i)$ induced by the principle $x_i$.
This design ensures that visual information remains available throughout inference while allowing different principles to induce distinct reasoning behaviors.

\subsubsection{Saliency-aware principle evaluation}
\label{sec:consensus}

Each principle is evaluated with a set of discrete criteria that capture both reasoning quality and visual grounding (details are in Appendix~\ref{appdx:implementation}).
Specifically, SAP is derived from four ordinal signals from the reasoning outcomes as follows:
\begin{itemize}
\item \textbf{Consensus match $c_i$:}  it measures whether a representative answer, detailed in Appendix~\ref{appdx:implementation},
induced by $x_i$ agrees with the population-level majority, capturing global answer stability under diverse principles.

\item \textbf{Within-principle diversity $d_i$:} it reflects how distinct the $\tau$ reasoning routes are under the same principle, encouraging controlled exploration without instability.

\item \textbf{Uncertainty penalty $u_i$:}  it measures the confidence of the selected reasoning routes, penalizing systematically over-confident or ambiguous behaviors.

\item \textbf{Evidence validity $e_i$:}  it evaluates whether the visual entities referenced in the reasoning correspond to valid salient regions in $\mathcal{O}$, enforcing visual grounding consistency without exposing saliency information to the language model and avoiding trivial principles.
\end{itemize}

All criteria take values in
\(
\{\texttt{low}, \texttt{medium}, \texttt{high}\},
\)
reflecting the inherently ordinal nature of multimodal supervision.

\subsubsection{Evolutionary optimization}

The above discrete criteria are combined into a scalar fitness:
\vspace{-10pt}
\begin{equation}
\label{eq:sap_fitness}
f_i =
\sum_{t \in \{c,d,e\}} w_t \mathrm{Score}(t_{i})
- w_u \mathrm{Pen}(u_{i}),
\end{equation}
where $\mathrm{Score}$ and $\mathrm{Pen}$ (which stands for penalty) are defined as ordinal mappings $(\texttt{low}, \texttt{medium}, \texttt{high}) \rightarrow (0,1,2)$, and $\{w_c, w_d, w_e, w_u\}$ are the weights for each criterion.

Based on the fitness scores, we perform population evolution using a $(\mu+\lambda)$ selection scheme. In each iteration,
the top-$\mu$ principles are retained as elites
(individuals with the highest fitness), while a new set of $\lambda$ principles is generated by sampling conditioned on the elites.
The elites and newly generated principles are combined as an updated population
\vspace{-5pt}
\[
\{x_i^{+}\}_{i \in [\mu+\lambda]} =
\{x_m^\mu\}_{m \in [\mu]} \cup \{x_n^\lambda\}_{n \in [\lambda]}.
\]
By iteratively evolving principles under saliency-aware evaluation, the proposed approach allocates inference-time computation to explore diverse reasoning behaviors while progressively favoring principles that maintain visual grounding throughout long-horizon inference.
This provides a population-based approximation to the principle-level optimization objective in Eq.~(\ref{equ:formulation}), enabling robust inference-time scaling in VLMs. Details are in Alg.~\ref{alg:init} and Alg.~\ref{alg:evolve},

%% file: sec/2_mtd/sec/analysis.tex
In this part, we provide theoretical analysis that quantifies how inference-time scaling operates in SAP by explicitly characterizing, in terms of $(\mu,\lambda,T)$, the probability of optimization improvement and the coverage of effective reasoning principles achieved by the evolutionary search, where $T$ is the total number of iterations. Under the evolutionary algorithm with elite selection, iterative filtering ensures that the population fitness is non-decreasing across generations, such that the best-performing principles monotonically improve as optimization proceeds.

\input{sec/2_mtd/thm/theorem_opt_gen}

Theorem~\ref{thm:opt_gen} explains how inference-time scaling emerges in SAP.
The optimization guarantee ensures stability: elite preservation prevents degeneration by retaining the best-performing principles across iterations.
At the same time, the improvement bound shows that increasing $\lambda$ raises the probability of discovering strictly better principles at each iteration.
Beyond optimization, the generalization bound characterizes the growing \emph{coverage of the principle space}.
Since each iteration evaluates $\lambda$ newly sampled principles, after $T$ iterations the search explores on the order of $\lambda T$ distinct candidates.
As $\lambda$ or $T$ increases, the probability of encountering effective principles outside the current population grows monotonically.
Together, these results show that inference-time scaling in SAP is achieved by allocating additional computation to explore and select among alternative reasoning principles, rather than extending a single reasoning trajectory.
Formal proofs are provided in Appendix~\ref{appdx:framework_analysis}.

\subsubsection{Comparison with long chain-of-thought}
SAP further exhibits a scaling advantage over traditional long chain-of-thought (CoT) reasoning by operating in a parallel and population-based manner. 
Consider a fixed inference-time token budget $L$. 
In long CoT reasoning, tokens are generated sequentially along a single trajectory, and transformer-based decoding incurs a quadratic attention cost w.r.t. the sequence length, i.e., $\mathcal{O}(L^2)$, together with strict sequential dependencies that limit parallelism across decoding steps.
In contrast, SAP allocates the same token budget across multiple shorter reasoning routes evaluated in parallel. 
Let $L = (\mu+\lambda)\tau \bar{\ell}$, where $\bar{\ell}$ denotes the average length of each reasoning route.
Each route incurs an attention cost of $\mathcal{O}(\bar{\ell}^2)$, yielding a total attention cost of
$\mathcal{O}\big((\mu+\lambda)\tau\, \bar{\ell}^2\big)$,
which is strictly lower than $\mathcal{O}(L^2)$ when $\bar{\ell} \ll L$.
Moreover, reasoning routes across different principles and routes are independent and can be executed in parallel, avoiding the sequential bottleneck inherent to long CoT.
Therefore, SAP scales inference-time reasoning by parallel exploration and selection rather than by elongating a single reasoning chain, providing a more computationally efficient alternative under inference budgets.

\subsubsection{Data- and gradient-free scaling}
SAP is a plug-to-play method in a fully data- and gradient-free manner.
The evolutionary search is performed directly in the space of reasoning principles, using the fixed knowledge already encoded in the pretrained model, without introducing additional training data or external supervision.
Rather than acquiring new knowledge, SAP explores organizing and applying existing knowledge during inference.
This aligns closely with the nature of reasoning, which concerns how information is structured, combined, and verified, rather than what new information is learned~\citep{zhu2025reasoning}.
Consequently, SAP improves reasoning quality by reallocating computation to search for better reasoning behaviors within the model’s capabilities, avoiding the cost and potential bias of data collection or retraining.

%% file: sec/2_mtd/thm/theorem_opt_gen.tex
\begin{theorem}[\textbf{Optimization and generalization with $(\mu+\lambda)$ evolution}]
\label{thm:opt_gen}
Consider the $(\mu+\lambda)$ evolutionary procedure over a principle space $\mathcal{X}$.
Let $F_t$ denote the best fitness in the population at iteration $t$, and let $T$ be the total number of iterations.
Then the following properties hold:
\begin{enumerate}
    \vspace{-10pt}
    \item (\textbf{Optimization stability}) Due to elite selection, the best fitness is non-decreasing: $F_{t+1} \ge F_t, \ \forall t$.
    Moreover, if each offspring sample has probability at least $q_t$ of instantiating a principle with fitness strictly greater than $F_t$, then the probability of improvement in one iteration satisfies $\Pr(F_{t+1} > F_t) \ge 1 - (1 - q_t)^{\lambda}$.
    \item (\textbf{Generalization as coverage}) Suppose there exists a set of effective principles $\mathcal{X}_{\mathrm{good}} \subset \mathcal{X}$, and each offspring sample has probability at least $q>0$ of belonging to $\mathcal{X}_{\mathrm{good}}$.
    Then the probability that the evolutionary procedure discovers at least one effective principle within $T$ iterations is lower bounded by
   $1 - (1 - q)^{\lambda T}$.
\end{enumerate}
\vspace{-10pt}
\end{theorem}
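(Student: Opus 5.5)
The plan is to treat Theorem~\ref{thm:opt_gen} as a statement about a stochastic process on populations. First I will fix a probabilistic model of Alg.~\ref{alg:evolve}: at iteration $t$ the population $P_t$ has $\mu+\lambda$ principles with fitness values. The best fitness is $F_t=\max_{x\in P_t} f(x)$. The next population is $P_{t+1}=E_t\cup O_t$, where $E_t$ is the top-$\mu$ set and $O_t$ is the set of $\lambda$ offspring sampled from $\varphi_x(p,E_t)$.

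The key modelling assumption, which I will state explicitly, is that the offspring are conditionally independent given the history $\mathcal{F}_t$. I also assume fitness is a deterministic function of the principle, or at least that an elite's fitness is not re-drawn. This matches Alg.~\ref{alg:evolve}, which reuses the stored $f_i$ for the TopK selection.

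For monotonicity in part~1, I argue that the maximiser of $f$ over $P_t$ belongs to $E_t$, since $\mu\ge 1$. Because $E_t\subseteq P_{t+1}$, the maximum over $P_{t+1}$ is at least $F_t$.

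For the improvement bound, I note that $\{F_{t+1}>F_t\}$ contains the event that some offspring $x_n\in O_t$ has $f(x_n)>F_t$. Its complement is the event that every offspring fails. Conditioning on $\mathcal{F}_t$ and using conditional independence gives
\[
\Pr\bigl(\forall n:\ f(x_n)\le F_t \mid \mathcal{F}_t\bigr)=\prod_{n=1}^{\lambda}\Pr\bigl(f(x_n)\le F_t\mid\mathcal{F}_t\bigr)\le (1-q_t)^{\lambda}.
\]
This yields $\Pr(F_{t+1}>F_t\mid\mathcal{F}_t)\ge 1-(1-q_t)^\lambda$, and taking expectations removes the conditioning when $q_t$ is a deterministic lower bound.

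Part~2 follows the same pattern over all $\lambda T$ offspring drawn during iterations $t=0,\dots,T-1$. Let $A$ be the event that none of these offspring lies in $\mathcal{X}_{\mathrm{good}}$. I will bound $\Pr(A)$ by chaining conditional probabilities over iterations: $\Pr(A)=\mathbb{E}\bigl[\prod_t \Pr(A_t\mid\mathcal{F}_t)\bigr]$ in the tower-property sense, where $A_t$ is the event that iteration $t$ produces no good offspring. Since each $\Pr(A_t\mid\mathcal{F}_t)\le(1-q)^\lambda$ uniformly in the history, an induction on $T$ gives $\Pr(A)\le(1-q)^{\lambda T}$. The discovery probability is therefore at least $1-(1-q)^{\lambda T}$. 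Good principles already present in the initial population only increase this probability, so they can be ignored.

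The main obstacle is not the computation but making the independence and uniformity hypotheses precise. The offspring distribution depends adaptively on the elites, and the probability $q$ must hold conditionally on every reachable history. Without that, the per-iteration bounds do not multiply.

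I will handle this by reading ``each offspring sample has probability at least $q$'' as a conditional statement given $\mathcal{F}_t$ and the previously drawn siblings. That reading makes the product bound valid via sequential conditioning even without full independence. A secondary subtlety is noisy evaluation: if $f$ is re-estimated each round, monotonicity holds only for the stored fitness values. I will remark that part~1 is stated with respect to those values.
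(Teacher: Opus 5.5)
Your proposal is correct and follows the paper's appendix proof essentially step for step. The top-$\mu$ elite set retains a maximizer of $P_t$, which gives monotonicity; conditional independence of the $\lambda$ offspring given the history bounds the no-improvement event by $(1-q_t)^\lambda$; and iterated conditioning over $t=0,\dots,T-1$ gives the $(1-q)^{\lambda T}$ coverage bound.

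One small caution: the identity $\Pr(A)=\mathbb{E}\bigl[\prod_t\Pr(A_t\mid\mathcal{F}_t)\bigr]$ that you (and the paper) write is not an equality in general. The induction you describe, $\Pr(A_0\cap\dots\cap A_t)=\mathbb{E}\bigl[\mathbf{1}_{A_0\cap\dots\cap A_{t-1}}\Pr(A_t\mid\mathcal{F}_t)\bigr]\le(1-q)^{\lambda}\Pr(A_0\cap\dots\cap A_{t-1})$, is the correct way to obtain the bound.
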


%% file: sec/3_exp/exp_v1.tex
\input{sec/3_exp/tbl/tbl_comparison_vertical}
\subsection{Experiment Setups}
\input{sec/3_exp/sec/setting}

\subsection{Comparison}
\input{sec/3_exp/sec/comparison}

\subsection{Ablation Study}
\input{sec/3_exp/sec/ablation}

%% file: sec/3_exp/tbl/tbl_comparison_vertical.tex
\begin{table*}[t]
\centering
\tablestyle{4.5pt}{1.3}
\caption{Comparison between longer and wider reasoning strategies across 16 vision-language benchmarks.
SAP yields better perception and higher average performance with stable visual grounding.
Benchmarks are grouped into two sets according to their evaluation focus.
All methods use the same backbone and inference-time decoding budget unless otherwise specified.
Baselines: Qwen3-VL-8B-Instruct; Qwen3-VL-8B-Thinking: LongCoT, sequential inference-time scaling; SAP (ours): parallel inference-time scaling on routes.}
\vspace{-8pt}
\label{tab:main_benchmark}

\newcommand{\NumCol}{9}

\begin{tabular}{
l
>{\columncolor{reprobg}}r
>{\columncolor{reprobg}}r
>{\columncolor{reprobg}}r
>{\columncolor{refbg}}r
>{\columncolor{refbg}}r
>{\columncolor{refbg}}r
>{\columncolor{refbg}}r
>{\columncolor{refbg}}r
}
&
\multicolumn{3}{c}{\cellcolor{reprobg}\emph{Reproduced Results}}
&
\multicolumn{5}{c}{\cellcolor{refbg}\emph{Reported Results (References)}}
\\
benchmarks ($\uparrow$)
& instruct & thinking & ins.+SAP (ours)
& Grok-1.5V & GPT-4V & Claude-S & Claude-O & Gemini-1.5
\\
\hline
avg. except MME
&
75.4
&
75.9
&
\cellcolor{default}\textbf{76.6}
&
&
&
&
&
\\
\hline
\multicolumn{\NumCol}{c}{\emph{Set I. Vision and Thinking: Perception / Reasoning / General Knowledge}} \\
\hline
POPE-recall~\citep{li2023evaluating}
& 83.9 & 79.6\textcolor{red!60}{$\downarrow$} & \textbf{89.9}\textcolor{blue!80}{$\uparrow$}
& -- & -- & -- & -- & -- \\
MME-p~\citep{fu2025mme}
& 1744.7 & 1663.2\textcolor{red!60}{$\downarrow$} & \textbf{1755.5}\textcolor{blue!80}{$\uparrow$}
& -- & -- & -- & -- & -- \\
MME-r~\citep{fu2025mme}
& 656.7 & \textbf{722.5}\textcolor{blue!80}{$\uparrow$} & 689.9\textcolor{blue!80}{$\uparrow$}
& -- & -- & -- & -- & -- \\
MMBench~\citep{liu2024mmbench}
& 70.7 & \textbf{87.0}\textcolor{blue!80}{$\uparrow$} & 82.9\textcolor{blue!80}{$\uparrow$}
& -- & -- & -- & -- & -- \\
SEEDBench~\citep{li2023seed}
& 68.5 & \textbf{78.3}\textcolor{blue!80}{$\uparrow$} & 77.3\textcolor{blue!80}{$\uparrow$}
& -- & -- & -- & -- & -- \\
MMMU~\citep{yue2024mmmu}
& \textbf{62.4} & 62.3\textcolor{red!60}{$\downarrow$} & 62.3\textcolor{red!60}{$\downarrow$}
& 53.6 & 56.8 & 53.1 & \textbf{59.4} & 58.5 \\
\hline
\multicolumn{\NumCol}{c}{\emph{Set II. Special Cases: Science / Text / OCR / Document / Diagram / Real-world}} \\
\hline
ScienceQA~\citep{saikh2022scienceqa}
& 88.0 & \textbf{92.5}\textcolor{blue!80}{$\uparrow$} & 92.2\textcolor{blue!80}{$\uparrow$}
& -- & -- & -- & -- & -- \\
TextVQA~\citep{singh2019towards}
& \textbf{82.3} & 77.2\textcolor{red!60}{$\downarrow$} & 81.3\textcolor{red!60}{$\downarrow$}
& \textbf{78.1} & 78.0 & -- & -- & 73.5 \\
OCRVQA~\citep{pham2025viocrvqa}
& \textbf{63.2} & 44.1\textcolor{red!60}{$\downarrow$} & 62.8\textcolor{red!60}{$\downarrow$}
& -- & -- & -- & -- & -- \\
OCRBench~\citep{liu2024ocrbench}
& 56.8 & \textbf{65.9}\textcolor{blue!80}{$\uparrow$} & 60.2\textcolor{blue!80}{$\uparrow$}
& -- & -- & -- & -- & -- \\
ChartQA~\citep{masry2022chartqa}
& \textbf{82.7} & 77.2\textcolor{red!60}{$\downarrow$} & 78.1\textcolor{red!60}{$\downarrow$}
& 76.1 & 78.5 & 81.1 & 80.8 & \textbf{81.3} \\
DocVQA~\citep{mathew2021docvqa}
& \textbf{92.2} & 91.6\textcolor{red!60}{$\downarrow$} & 90.6\textcolor{red!60}{$\downarrow$}
& 85.6 & 88.4 & \textbf{89.5} & 89.3 & 86.5 \\
InfoVQA~\citep{mathew2022infographicvqa}
& 82.6 & \textbf{84.1}\textcolor{blue!80}{$\uparrow$} & 83.1\textcolor{blue!80}{$\uparrow$}
& -- & -- & -- & -- & -- \\
AI2D~\citep{kembhavi2016diagram}
& 79.2 & \textbf{82.2}\textcolor{blue!80}{$\uparrow$} & 80.1\textcolor{blue!80}{$\uparrow$}
& 88.3 & 78.2 & \textbf{88.7} & 88.1 & 80.3 \\
BLINK~\citep{fu2024blink}
& \textbf{68.2} & 60.3\textcolor{red!60}{$\downarrow$} & 62.6\textcolor{red!60}{$\downarrow$}
& -- & -- & -- & -- & -- \\
RealWorldQA~\citep{realworldqa}
& 70.3 & 72.8\textcolor{blue!80}{$\uparrow$} & \textbf{73.3}\textcolor{blue!80}{$\uparrow$}
& \textbf{68.7} & 61.4 & 51.9 & 49.8 & 67.5 \\
\end{tabular}
\vspace{-20pt}
\end{table*}

%% file: sec/3_exp/sec/setting.tex
We evaluate SAP on a diverse set of multimodal reasoning benchmarks.
All experiments are conducted in a pure inference setup without any parameter updates.

\textbf{Models.}
We use \emph{Qwen3-VL-8B}~\citep{li2026qwen3} as the backbone vision--language model.
All reasoning, principle generation, and aggregation steps are performed by the same frozen model.
For visual grounding, we employ \emph{the Segment Anything Model (SAM)}~\citep{kirillov2023segment} for automatic mask generation,
with object labels produced by a lightweight external captioner.
Unless otherwise specified, the model only observes the original image(s) and a compact textual grounding summary, without access to intermediate visual crops or region-level supervision. Publicly reported models
~\citep{xai2024grok,achiam2023gpt,anthropic2024claude,team2024gemini} are posted in Tab~\ref{tab:main_benchmark}
for better comparison.

\textbf{Inference procedure.}
Given a prompt $p$ and one or multiple images $\mathcal{I}$, SAP maintains a population of $\mu+\lambda$ reasoning principles.
For each principle $x_i$, the model generates $\tau$ reasoning traces in a \emph{single} forward call.
Principle fitness is computed using the discrete utility function defined in Eq.~(\ref{eq:sap_fitness}),
and population evolution follows a $(\mu+\lambda)$ selection scheme.
After the final generation, SAP optionally performs a single aggregation call to synthesize the final answer.

\paragraph{Benchmarks.}
We evaluate SAP on 16 vision--language benchmarks using VLMEvalKit~\citep{duan2024vlmevalkit}.
Details of each benchmark are provided in Appendix~\ref{appdx:bench}.
For clarity, benchmarks are grouped into two sets according to their evaluation focus:
\emph{Reasoning / Knowledge / General} and \emph{OCR / Document / Diagram / Real-world}.
All results are reported under a unified inference-time decoding budget.

\textbf{Hyperparameters.}
Unless otherwise specified, we use $\mu+\lambda=4$ with $\mu=2$, $\lambda=2$, and $\tau=2$.
Discrete fitness weights are fixed to $w_c=w_d=w_e=w_u=1$.
The total number of evolutionary generations $T$ is set to 2 by default, the scaling of which is further investigated in the efficiency ablation.
The maximum number of grounded objects per image is capped to limit computational overhead,
and all discretization thresholds are kept fixed across datasets.

\textbf{Evaluation protocol.}
We follow the official evaluation protocols of each benchmark.
For exact-answer datasets, we use exact string matching.
For multiple-choice datasets, answers are mapped to the corresponding option labels.
No dataset-specific tuning is performed.
All reported results are averaged over the full validation or test splits.
Throughout the paper, \emph{average performance} (avg. perf.) refers to the average score across all benchmarks,
excluding MME due to its distinct scoring scale.

\paragraph{Baselines.}
We compare SAP against standard inference strategies under identical model backbones and evaluation settings.
Specifically, we consider:
\textbf{\textit{(i)}} \emph{Instruct}, which directly generates an answer from the input image(s) and question;
\textbf{\textit{(ii)}} \emph{Thinking}, which allows explicit intermediate reasoning traces during generation;
and
\textbf{\textit{(iii)}} \emph{SAP}, which performs principle-based evolutionary reasoning at inference time.
All methods use the same checkpoints and decoding configurations, without any additional finetuning.

%% file: sec/3_exp/sec/comparison.tex
SAP achieves higher average performance across benchmarks and exhibits markedly more stable inference-time behavior than LongCoT-style scaling.
While the \textit{thinking} model improves results on several reasoning-heavy benchmarks (e.g., MMBench and SEEDBench), it also incurs substantial regressions on perception- and grounding-sensitive tasks.
Notably, compared to the Instruct baseline, LongCoT shows clear drops on POPE-recall (83.9 $\rightarrow$ 79.6), TextVQA (82.3 $\rightarrow$ 77.2), and OCRVQA (63.2 $\rightarrow$ 44.1), indicating that longer sequential reasoning can weaken visual grounding.

In contrast, SAP maintains performance close to the baseline on these benchmarks, with degradations consistently bounded within a small margin (typically $<5$ points), while remaining competitive on reasoning-oriented tasks.
As a result, SAP achieves a higher overall average score than LongCoT under the same inference-time budget.
Importantly, this improvement does not come from introducing additional knowledge, but from preserving the ability to re-consult visual evidence during later reasoning stages (\textit{e.g.} better POPE-recall), thereby avoiding the robustness–depth trade-off observed in LongCoT inference-time scaling.

%% file: sec/3_exp/sec/ablation.tex
All module ablations are evaluated and compared with the default setting, as well as the corresponding standard \emph{Instruct} and LongCoT \emph{Thinking} baselines.
\input{sec/3_exp/tbl/tbl_ablation_modules}

\paragraph{Modules of SAP.} Table~\ref{tab:ablation_modules} shows that the core components, i.e., evolution and diversity of SAP, are necessary for effective inference-time optimization.
When population evolution is removed (\texttt{EVO\_GEN}=0), increasing the number of routes alone fails to improve performance, indicating that route diversity without selection is insufficient.
Conversely, disabling multi-route reasoning ($\tau=1$) significantly degrades performance, as evolutionary refinement collapses to single-trajectory optimization.
The full SAP configuration, which combines evolutionary refinement and route diversity preferences of elite selection, achieves the best average performance.
Replacing elite selection with aggregation further reduces performance, showing that many routes and principles are in fact detrimental, and that effective inference-time reasoning requires explicitly filtering.

\paragraph{Evolutionary hyperparameters.}
The following results show that since SAP does not introduce new knowledge, its inference-time scaling has an inherent upper bound.
Increasing the elite size $\mu$ does not consistently improve performance, indicating that high survival rates can retain suboptimal principles and hinder the progress of the population.
Overall, balanced exploration rather than maximizing survival is more effective for inference-time optimization.
\input{sec/3_exp/tbl/tbl_evo_hyperparam}

\input{sec/3_exp/fig/fig_ablation_weights}

\input{sec/3_exp/fig/fig_response_time}
\paragraph{Fitness weights.}
This ablation investigates the effect of different weight combinations in the discrete fitness function.
We vary the relative weights assigned to the criteria defined in Section \ref{sec:consensus}, i.e., \textit{consensus match}, \textit{within-principle diversity}, \textit{evidence consistency}, and \textit{uncertainty penalties}, while keeping the evolutionary procedure fixed.
This analysis evaluates how different inductive biases in principle selection influence inference-time reasoning behavior.
Fig.~\ref{fig:ablation_weight} shows that the importance of each fitness component depends critically on whether route aggregation is applied.
In the default non-aggregated setting, configurations that assign higher weight to consensus ($w_c$) achieve better performance, indicating that explicit agreement across reasoning routes is necessary to stabilize final predictions.
When additional route aggregation is enabled for final decision making, performance becomes substantially more sensitive to within-principle diversity ($w_d$), as aggregation implicitly enforces agreement and shifts the optimization focus toward exploring diverse yet informative reasoning behaviors.
Across both aggregated and non-aggregated settings, evidence consistency remains a necessary complementary signal.
While evidence grounding alone is insufficient to drive strong performance, combining high evidence consistency with either strong consensus (w/o agg.) or high diversity (w/ agg.) yields more stable and reliable reasoning outcomes.

\paragraph{Response time.}
Fig.~\ref{fig:response_time} shows that on a single device, SAP exhibits higher latency than LongCoT-style reasoning due to its multi-route generation and iterative principle refinement.
In parallel settings, however, SAP achieves substantially lower response time.
SAP decomposes inference into $T$ rounds of short-route generation and principle updates (with $T{=}2$ by default), where all routes within a round are independent and can be executed in parallel.
This structure avoids long token-level dependencies and large intermediate context synchronization required by LongCoT, reducing pipeline stalls and improving GPU utilization.
By adjusting the number of iterations and routes, SAP matches LongCoT in total token count and computation while achieving lower wall-clock latency through parallelism.

\paragraph{Model scales.}
To examine the effect of model scales, we additionally evaluate SAP's average performance on Qwen3-VL-Instruct-2B, -4B, and -30B. SAP consistently improves the performance of these models.
\input{sec/3_exp/tbl/tbl_ablation_model_scales}

\paragraph{Model architectures.} To further verify the generality of SAP, we evaluate it on additional vision–language backbones, including InternVL3.5-8B~\citep{wang2025internvl3} and DeepSeek-VL2 (4.5B Mix-of-Experts)~\citep{wu2024deepseek}.
Across both models, SAP consistently improves empirical performance on overall and reasoning-oriented benchmarks, demonstrating that its benefits are not tied to a specific architecture or model scale.
These results indicate that SAP can serve as a general inference-time optimization strategy applicable across diverse vision–language models.
\input{sec/3_exp/tbl/tbl_ablation_architecture}

%% file: sec/3_exp/tbl/tbl_ablation_modules.tex
\begin{table}[t]
\centering
\tablestyle{4pt}{1.3}
\caption{Ablation of SAP modules. \emph{Harmful principles should not be aggregated.} Evolution and filtering are verified important.}
\vspace{-5pt}
\label{tab:ablation_modules}
\begin{tabular}{lrrr>{\columncolor{reprobg}}r}
{settings} &
{\#gen. evo.} &
\#routes $\boldsymbol{\tau}$ &
{decision} & 
{avg. perf.}\\
\hline
instruct & 0 & 1 & -- & 75.4\\
w/o evolution & 0 & $\tau$ & elite & 73.4 \\
w/o multi-route & $T$ & 1 & elite & 73.0 \\
full SAP & $T$ & $\tau$ & elite & \cellcolor{default}\textbf{76.6} \\
w/ summarizer & $T$ & $\tau$ & agg. & 74.8 \\ 
\vspace{-30pt}
\end{tabular}
\end{table}

%% file: sec/3_exp/tbl/tbl_evo_hyperparam.tex
\begin{table}[h]
    \centering
    \vspace{-10pt}
    \tablestyle{9pt}{1.1}
    \begin{tabular}{l
>{\columncolor{reprobg}}rrrrrrrrr}
    $\lambda$ \#new born
    &2
    &
    2
    &
    4
    &
    4
    &
    6
    \\
    $\mu$ \#elites
    &
    2
    &
    4
    &
    2
    &
    4
    &
    6
    \\
    \hline
    avg. perf.
    &
    \cellcolor{default}76.6
    &
    74.8
    &
    \textbf{76.8}
    &
    76.5
    &
    \textbf{76.8}
    \\
    \vspace{-25pt}
    \end{tabular}
    \label{tab:evo_hyper}
\end{table}

%% file: sec/3_exp/fig/fig_ablation_weights.tex
\begin{figure}[t]
    \centering
    \includegraphics[width=\linewidth]{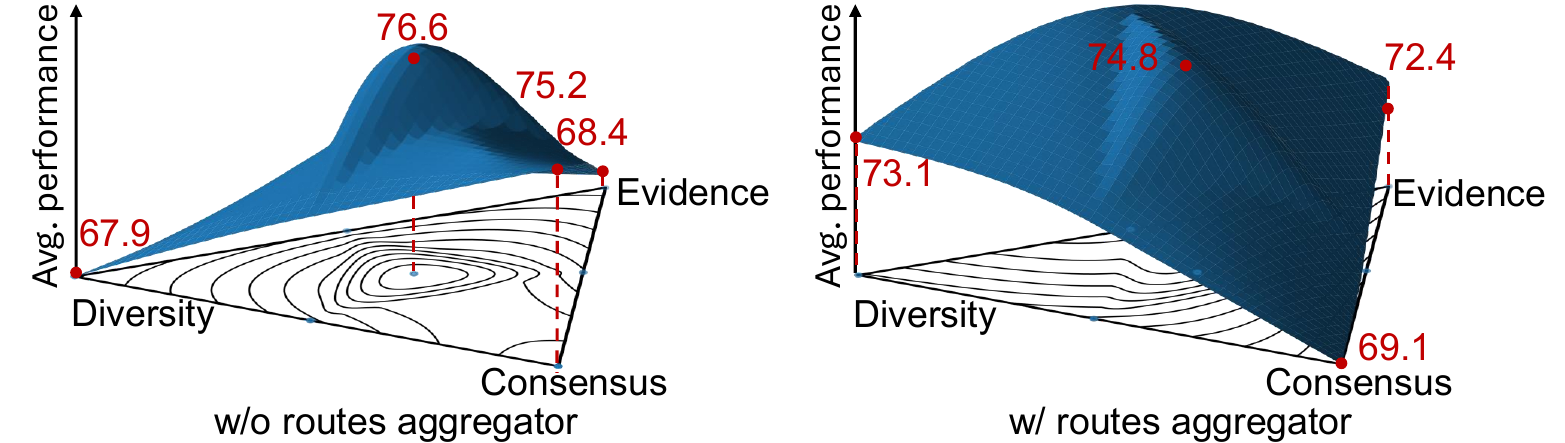}
    \vspace{-10pt}
    \caption{Ablation on weights of rewarding terms in fitness (loss design): aggregation of routes for final answer reduces the reliance on consensus in multimodal reasoning. Diversity, consensus, evidence refer to weights ratio of $w_{d}, w_{c}, w_{e}$ where $\sum w=1$ and uncertainty penalty is always 1, $w_{u}=1$.}
    \vspace{-10pt}
    \label{fig:ablation_weight}
\end{figure}

%% file: sec/3_exp/fig/fig_response_time.tex
\begin{figure}[t]
    \centering
    \includegraphics[width=\linewidth]{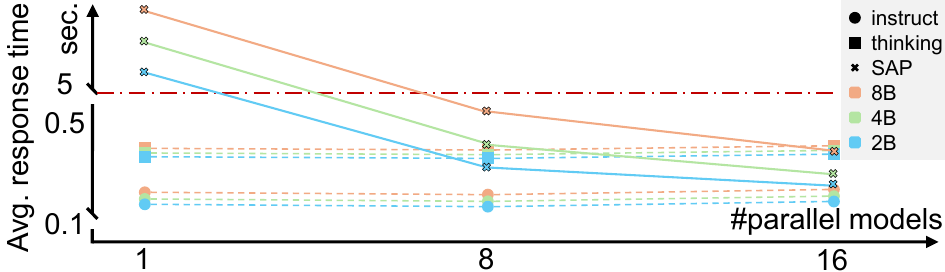}
    \vspace{-15pt}
    \caption{SAP supports inference-time speedup via parallelism.}
    \vspace{-10pt}
    \label{fig:response_time}
\end{figure}

%% file: sec/3_exp/tbl/tbl_ablation_model_scales.tex
\begin{table}[h]
    \centering
    \tablestyle{14pt}{1.1}
    \vspace{-10pt}
    \begin{tabular}{lrr>{\columncolor{reprobg}}rr}
        sizes
        &
        2B
        &
        4B
        &
        8B
        &
        30B
        \\
        \hline
        instruct
        &
        60.4
        &
        75.4
        &
        75.4
        &
        77.3
        \\
        ~+SAP
        &
        66.6
        &
        75.6
        &
        \cellcolor{default}76.6
        &
        78.1
        \\
    \vspace{-30pt}
    \end{tabular}
    \label{tab:model_scales}
\end{table}

%% file: sec/3_exp/tbl/tbl_ablation_architecture.tex
\begin{table}[h]
    \centering
    \tablestyle{4.6pt}{1.1}
    \vspace{-10pt}
    \begin{tabular}{l
    >{\columncolor{reprobg}}r
    >{\columncolor{reprobg}}r
    >{\columncolor{refbg}}r
    >{\columncolor{refbg}}r}
        models
        &
        InternVL3\_5
        &
        +SAP
        &
        Deepseek-VL2
        &
        +SAP
        \\
        \hline
        MMMU
        &
        74.6
        &
        75.3
        &
        51.2
        &
        56.7
        \\
        MMBench
        &
        83.4
        &
        84.6
        &
        79.5
        &
        79.8
        \\
    \end{tabular}
    \vspace{-15pt}
    \label{tab:architecture}
\end{table}

%% file: sec/4_rlt/rlt_v1.tex
\paragraph{Multimodal inference-time scaling.}
Recent work has argued that reducing over-reliance on visual features can improve robustness in vision-language models~\citep{wang2024can}, particularly by mitigating spurious correlations and hallucinations~\citep{zhao2025r1,wang2025multimodal}.
We complement this perspective by showing that, during long sequential reasoning, overly suppressing visual features can be detrimental when later reasoning steps require renewed access to visual evidence.
Lin et al.~\citep{lin2025investigating} investigate inference-time scaling with multimodal chain-of-thought, demonstrating benefits over text-only reasoning, but at the cost of long, sequential generation.
Wang et al.~\citep{wang2025scaling} introduce a vision \emph{value} model to guide inference-time search toward visually grounded generations.
However, existing approaches primarily scale inference along a single reasoning trajectory, either by extending chain-of-thought or performing sequential search. In contrast, our work explores \emph{multi-route reasoning}, allocating computation to parallel \emph{principle-level} routes with \emph{discrete evaluations}, which enables selective re-grounding in visual evidence and improves stability and parallelism.

\paragraph{Evolutionary Prompt.} 
Prior work has shown that large language models (LLMs) can exhibit reasoning behaviors through prompting~\citep{kojima2022large}. demonstrated that simple instruction-level cues enable zero-shot reasoning, while \cite{pryzant2023automatic} treated prompts as optimizable objects via gradient-inspired search.
More recently, \cite{wu2024evolutionary} surveyed the growing intersection of evolutionary computation and LLMs, \cite{van2024llamea} proposed LLAM-EA, where evolutionary operators act over algorithmic structures
generated by LLMs.
This shift from prompt-based reasoning to evolutionary frameworks involving LLMs primarily applies evolution offline.
In contrast, our work studies \emph{evolutionary reasoning at inference time}, where populations of structured reasoning units evolve.

%% file: sec/5_sum/sum_v1.tex
In this work, we revisit a well-known tendency of vision–language models toward text-dominated reasoning and show that, in long-horizon inference, this behavior has a negative effect by preventing later reasoning steps from re-consulting visual evidence when it becomes necessary.
Based on this analysis, we propose a new inference-time scaling paradigm for vision–language reasoning, \emph{Saliency-Aware Principle Selection} (SAP), which is data-free, model-agnostic, and operates at the level of high-level reasoning principles rather than token-level trajectories.
By enabling multi-route, principle-guided exploration with sustained access to visual evidence, SAP provides an effective and parallelizable alternative to sequential LongCoT reasoning.

%% file: sec/6_appdx/impact_statement.tex
This paper presents work whose goal is to advance the field of machine learning. There are many potential societal consequences of our work, none of which we feel must be specifically highlighted here. 
Further optimization of parallel inference efficiency may reduce unnecessary inference overhead and lower energy consumption. Especially if there is an efficient principle library for all problems, it can further reduce the consumption of parallel inference on each route, thereby further reducing energy consumption.

%% file: sec/6_appdx/appdx_v1.tex
\section{More Discussion}
\input{sec/6_appdx/sec/Discussion}

\section{Implementation Details \& Code Report: Saliency-Aware Principle Selection}
\label{appdx:implementation}
\input{sec/6_appdx/sec/Implementation}

\subsection{Framework: Principle Evolutionary Reasoning}
\input{sec/2_mtd/sec/framework}

\section{Related Benchmarks}
\label{appdx:bench}
\input{sec/6_appdx/sec/Benchmarks}

\section{Algorithm Analysis}
\label{appdx:framework_analysis}
\input{sec/6_appdx/sec/Analysis_Framework}

\section{Template Design}
\input{sec/6_appdx/sec/Template}

%% file: sec/6_appdx/sec/Discussion.tex
\paragraph{Limitations.} We note several limitations of the current approach for further research.
SAP is most effective and responds quickly only in large-scale or parallel inference settings, where its multi-route design can amortize computational costs.
In addition, its performance depends on the quality of the prompt templates of the evolutionary system and the principle of generation, which requires careful design.
Future work may explore meta-level principles for guiding principle generation itself, enabling more systematic control over diversity and randomness in the principle space.

\paragraph{Visual modality diminishes in textual autoregressive generation.} As illustrated in Fig.~\ref{fig:intro}, during long vision–language reasoning, the contribution of visual information tends to diminish as generation progresses.
While early reasoning steps may actively rely on visual evidence, later steps increasingly focusing more exclusively on the textual space.
This phenomenon reflects a fundamental discrepancy between modalities: textual representations are explicitly generated and updated at every decoding step, whereas visual inputs are typically incorporated only indirectly through fixed encodings or limited cross-modal interactions.
Once visual information is compressed into an early textual summary, any omissions or misinterpretations become difficult to correct in subsequent reasoning steps.
As inference-time computation is scaled by increasing generation length, such early grounding errors are therefore amplified rather than mitigated.
Beyond this modality imbalance, inference-time optimization in VLMs faces two intrinsic challenges.

\paragraph{Challenge I: inconsistent and ordinal signals.} First, supervision signals in multimodal reasoning are inherently inconsistent and ordinal.
Evaluations of vision–language reasoning outputs—whether provided by humans or models—are often based on heterogeneous and implicit criteria, particularly regarding how visual evidence should be used.
For the same reasoning outcome, different evaluators may emphasize different aspects, such as visual grounding, logical completeness, or conciseness.
Unlike language-only settings, where such inconsistency primarily affects stylistic preferences, in vision–language reasoning it directly influences whether and how visual evidence is incorporated.
As a result, supervision does not define a single well-calibrated scalar objective.
Instead, what is naturally available is an \emph{ordinal signal}: a partial ordering that indicates which reasoning behaviors are preferable to others, rather than forcing heterogeneous judgments into a single quantitative score.

\paragraph{Chanllenge II: sparse and discrete search space.} Second, the search space of inference-time reasoning is discrete and sparse.
Both token-level reasoning trajectories and higher-level reasoning principles are discrete objects drawn from a combinatorial space without a smooth local structure.
This challenge is further amplified in vision–language reasoning, where different reasoning behaviors may rely on different subsets of visual evidence, making local comparisons between candidates ill-defined.
Consequently, optimization strategies that rely on continuity assumptions or local gradient information are poorly suited for this setting.

Taken together, these properties suggest that effective inference-time optimization in VLMs should
(i) operate directly on discrete candidates,
(ii) rely on relative comparisons rather than absolute scalar supervision, and
(iii) remain robust under noisy and inconsistent feedback.
These requirements motivate a population-based, selection-driven optimization paradigm for principle-level reasoning, which we adopt in our approach.

%% file: sec/6_appdx/sec/Implementation.tex
\paragraph{SAP Overview.}
Given a question $p$ and one or multiple images $\mathcal{I}=\{I_k\}_{k=1}^{K}$, SAP performs (i) grounding, (ii) principle generation and evolutionary selection, and (iii) final aggregation. Throughout the process, the LLM only observes the \emph{original images} and a compact \emph{textual grounding summary}; object lines and crops are used only for fitness evaluation and are never fed back to the LLM. All assessment signals are discretized into three levels: \texttt{low}/\texttt{medium}/\texttt{high}.

\paragraph{Step 1: Multi-image grounding.}
For each image $I_k$, SAP calls a grounding service (e.g., SAM-based AMG) to obtain a set of grounded objects $\mathcal{O}_k=\{o_{k,j}\}_{j=1}^{M_k}$, each with a label $\ell_{k,j}$ (obtained via an external captioner/labeler), and optional mask/bbox/score metadata.
SAP converts $\{\mathcal{O}_k\}_{k=1}^{K}$ into a compact grounding summary string $g$ that only describes counts and image sizes (optionally plus other lightweight statistics). SAP also builds a label map so that any object reference can be canonicalized as \texttt{img\#k\_obj\#j($\ell_{k,j}$)}.

\paragraph{Step 2: Initialize a principle population.}
SAP samples an initial population of principles $\{x_i\}_{i=1}^{\mu+\lambda}$ by a single LLM call conditioned on $(p,g)$ (and optionally the images if desired by implementation, but not required). Each $x_i$ is a short textual instruction that specifies a reasoning strategy (e.g., “verify with grounded objects before answering”).

\paragraph{Step 3: One-call multi-route reasoning per principle.}
For each generation, SAP evaluates each principle $x_i$ with exactly one LLM call. The LLM is provided with $(p, g, \mathcal{I}, x_i)$ and is asked to internally generate $\tau$ distinct reasoning routes under the same principle:
\[
\{(r_{i,j}, s_{i,j})\}_{j=1}^{\tau}.
\]
Each route produces a candidate \texttt{final\_answer}, a short list of reasons, an uncertainty level $u_{i,j}\in\{\texttt{low},\texttt{medium},\texttt{high}\}$, and a list of evidence references. The LLM also outputs a within-principle diversity level $d_i\in\{\texttt{low},\texttt{medium},\texttt{high}\}$ summarizing how different the $\tau$ routes are.

\paragraph{Step 4: Representative answer and global consensus.}
SAP selects a representative route per principle (typically preferring lower uncertainty and non-empty answers) and obtains a representative answer $y_i^\star$.
SAP then computes a global consensus answer $\bar{y}$ by majority vote over $\{y_i^\star\}$ and converts the winning fraction into a discrete consensus level $c\in\{\texttt{low},\texttt{medium},\texttt{high}\}$. In addition, SAP assigns each principle a discrete match-to-consensus level
\[
c_i=
\begin{cases}
\texttt{high}, & y_i^\star=\bar{y},\\
\texttt{median}, & y^{*}_{i}\sim_{\varphi}\bar{y}\\
\texttt{low}, & \text{otherwise}.
\end{cases}
\]
This design enforces that principles are rewarded only if their representatives' answers align with the population-level agreement. $y^{*}\sim_{\varphi} \bar{y}$ means $y^{*}$ is judged as the same meaning of $\bar{y}$ by language model $\varphi$.

\paragraph{Step 5: Evidence validation and discretization.}
SAP canonicalizes each cited evidence item into the format \texttt{img\#k\_obj\#j($\ell_{k,j}$)} using the label map.
It then evaluates whether cited object indices are valid grounding outputs (valid vs.\ invalid references). The ratio of valid references is discretized into an evidence level $e_i\in\{\texttt{low},\texttt{medium},\texttt{high}\}$, with empty evidence treated as \texttt{low}. This evidence level is used as a lightweight grounding-consistency signal; it never requires the LLM to see object crops.

\paragraph{Step 6: Discrete-level fitness and selection.}
For each principle $x_i$, SAP computes a scalar fitness score from four discrete signals: consensus match $c_i$, within-principle diversity $d_i$, representative uncertainty $u_i$ (from the representative route), and evidence level $e_i$.
Internally, SAP maps \texttt{low}/\texttt{medium}/\texttt{high} to ordinal scores and combines them with integer weights:
\[
f_i \leftarrow
w_c\,\mathrm{Score}(c_i) +
w_d\,\mathrm{Score}(d_i) +
w_e\,\mathrm{Score}(e_i) -
w_u\,\mathrm{Pen}(u_i),
\]
where $\mathrm{Score}(\texttt{low},\texttt{medium},\texttt{high})=(0,1,2)$ and $\mathrm{Pen}(\texttt{low},\texttt{medium},\texttt{high})=(0,1,2)$.
SAP selects the top-$\mu$ principles as elites and (if more generations remain) asks the LLM to propose $\lambda$ new principles conditioned on the elites and their fitness, forming the next $(\mu+\lambda)$ population.

\paragraph{Step 7: Final aggregation.}
After the last generation, SAP produces the final answer by aggregating the evaluated principles.
Optionally, it issues one final LLM call conditioned on $(p,g)$ and the list of representative summaries to synthesize a final answer with brief reasons and a discrete uncertainty level. If the aggregator fails, SAP falls back to the representative answer from the global-consensus group with conservative uncertainty.

%% file: sec/2_mtd/sec/framework.tex
Building on the above formulation, we approximate the principle-level optimization objective via population-based evolutionary search, referred to as \emph{Principle Evolutionary Reasoning} (PER). PER is proposed as a general framework for inference-time scaling by iteratively improving a population of reasoning principles with respect to the utility function $f$.

The overall procedure consists of initialization and iterative evolution. As shown in Alg.~\ref{alg:init}, an initial population of principles $\{x_i\}_{i \in [\mu+\lambda]}$ is sampled conditioned on the prompt $p$. For each principle $x_i$, the language model generates multiple reasoning traces and summaries,
\[
(r_{i,j}, s_{i,j}) \sim \varphi(r, s \mid x_i, p), \quad j \in [\tau],
\]
which instantiate the reasoning route $\rho(x_i)$ induced by the principle.

Given these outputs, each principle is evaluated by a task-specific utility metric, producing a fitness score
\[
f_i = f(\{r_{i,j}, s_{i,j}\}),
\]
which estimates the effectiveness of the corresponding reasoning route. Based on these fitness scores, PER performs population evolution as described in Alg.~\ref{alg:evolve}. Specifically, a $(\mu+\lambda)$ selection scheme is adopted: the top-$\mu$ principles
\[
\{x_m^\mu\}_{m \in [\mu]} = \mathrm{Top}_{\mu}\big(\{(x_i, f_i)\}_{i \in [\mu+\lambda]}\big)
\]
are retained as elites, while a new set of $\lambda$ principles is generated by sampling conditioned on the elites,
\[
x_n^\lambda \sim \varphi(x \mid \{x_m^\mu\}_{m\in[\mu]}, p), \quad n \in [\lambda].
\]
The elites and newly generated principles are combined to form the updated population
\[
\{x_i^{+}\}_{i \in [\mu+\lambda]} = \{x_m^\mu\}_{m \in [\mu]} \cup \{x_n^\lambda\}_{n \in [\lambda]}.
\]

By iteratively applying population evolution, PER provides a population-based approximation to the optimization problem in Equ.~(\ref{equ:formulation}), enabling robust inference-time scaling under discrete generation and noisy feedback.

%% file: sec/6_appdx/sec/Benchmarks.tex
\paragraph{Benchmarks.}
We evaluate SAP on a diverse set of vision--language benchmarks spanning robustness, general reasoning, scientific understanding, OCR-centric reasoning, document analysis, and real-world generalization.

\textbf{POPE}~\cite{li2023evaluating} evaluates object hallucination and existence verification by constructing balanced positive and negative samples.
It directly measures whether models can correctly ground answers in visual evidence rather than prior bias.

\textbf{MME}~\cite{fu2025mme} assesses multimodal robustness by separating perception and reasoning abilities.
The reasoning subset emphasizes logical inference beyond direct visual recognition, making it suitable for evaluating inference-time reasoning strategies.

\textbf{MMBench}~\cite{liu2024mmbench} is a large-scale multiple-choice benchmark for general multimodal understanding, covering object recognition, attribute reasoning, spatial relations, and commonsense inference.
It serves as a standard indicator of overall VLM capability.

\textbf{SEEDBench}~\cite{li2023seed} evaluates general visual reasoning and understanding across perception, logic, and commonsense tasks, emphasizing consistency across heterogeneous question types.

\textbf{MMMU}~\cite{yue2024mmmu} targets college-level multimodal reasoning in mathematics, physics, chemistry, and engineering.
Many questions involve diagrams or multiple images, requiring structured reasoning and cross-image integration.

\textbf{ScienceQA}~\cite{saikh2022scienceqa} evaluates scientific question answering with visual contexts, combining diagrams with domain knowledge and multi-step reasoning.

\textbf{TextVQA}~\cite{singh2019towards} measures scene text understanding, where correct answers depend on reading and reasoning over text embedded in natural images.

\textbf{OCRVQA}~\cite{pham2025viocrvqa} and \textbf{OCRBench}~\cite{liu2024ocrbench,fu2024ocrbench} further stress OCR-centric reasoning under complex layouts and challenging visual conditions, focusing on accuracy and robustness of text grounding.

\textbf{ChartQA}~\cite{masry2022chartqa} evaluates reasoning over charts and plots, requiring numerical comparison and interpretation grounded in visualized data.

\textbf{DocVQA}~\cite{mathew2021docvqa} and \textbf{InfoVQA}~\cite{mathew2022infographicvqa} focus on document understanding, where models must reason over scanned documents, tables, and forms with dense textual content.

\textbf{AI2D}~\cite{kembhavi2016diagram} tests diagram understanding in science education scenarios, emphasizing relational and structural reasoning over visual components.

\textbf{BLINK}~\cite{fu2024blink} evaluates multimodal entity linking, requiring alignment between visual entities and background knowledge.

\textbf{RealWorldQA} assesses open-world robustness using real-life images and unconstrained questions, testing generalization beyond curated benchmarks~\cite{realworldqa}. \url{https://huggingface.co/datasets/visheratin/realworldqa}

%% file: sec/6_appdx/sec/Analysis_Framework.tex
\subsection{Notation and Setup}
Let $\mathcal{X}$ denote the (discrete) space of reasoning principles. The $(\mu+\lambda)$ evolutionary procedure maintains a population
\[
P_t = \{x^{(t)}_i\}_{i=1}^{N}, \qquad N=\mu+\lambda,
\]
at iteration $t\in\{0,1,\dots,T\}$. Each principle $x\in\mathcal{X}$ induces a set of reasoning outcomes (trajectories and summaries) via one model call, and is assigned a \emph{fitness} value $f(x)\in\mathbb{R}$ through the discrete assessment and ordinal mapping described in Eq.~\eqref{eq:sap_fitness}. Define the best fitness in the population at iteration $t$ as
\[
F_t = \max_{x\in P_t} f(x).
\]

\paragraph{Elites and offspring.}
At iteration $t$, the elite set $E_t\subseteq P_t$ consists of the top-$\mu$ principles in $P_t$ ranked by fitness. The offspring multiset
\(
O_t=\{X^{(t)}_n\}_{n=1}^{\lambda}
\)
is generated by sampling from a conditional distribution induced by the LLM-based proposer:
\[
X^{(t)}_n \sim \varphi_x(\cdot \mid p, E_t), \qquad n\in[\lambda],
\]
and the next population is formed as
\[
P_{t+1}=E_t \cup O_t.
\]
We emphasize that the discrete evaluation in SAP is used only to produce ordinal levels and an internal ranking/selection signal; no continuous confidence calibration is assumed.

\subsection{Assumptions on Sampling and Dependence}

\begin{assumption}[Conditional i.i.d.\ offspring sampling]
\label{ass:iid}
Conditioned on the history $\mathcal{H}_t$ up to iteration $t$ (which includes $p$, all past populations, and thus $E_t$), the offspring samples
\(
X^{(t)}_1,\dots,X^{(t)}_\lambda
\)
are conditionally independent and identically distributed according to $\varphi_x(\cdot\mid p,E_t)$.
\end{assumption}

Assumption~\ref{ass:iid} is satisfied when each offspring is generated by an independent call to the proposer (or independent decoding randomness) conditioned on the same $(p,E_t)$. If implementation introduces mild dependence, the probability bounds below can be extended under standard negative dependence conditions; we state the i.i.d.\ form for clarity and reviewer readability.

\subsection{Optimization Guarantees}

\subsubsection{Monotone best-fitness (no regression)}
\begin{theorem}[Monotone best-fitness under $(\mu+\lambda)$ selection]
\label{thm:mono_app}
For all iterations $t\ge 0$, the best fitness is non-decreasing:
\[
F_{t+1} \ge F_t.
\]
\end{theorem}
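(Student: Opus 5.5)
The argument is a direct elitism inclusion. The plan is to show that the best individual of $P_t$ survives into $P_{t+1}$ and still carries the same fitness value there. Then the maximum over the larger-or-equal set cannot drop.

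\textbf{Step 1.} Fix $t\ge 0$ and pick $x^\star_t\in\arg\max_{x\in P_t} f(x)$. Such a maximizer exists because $P_t$ is finite with $|P_t|=N=\mu+\lambda$. Ties are broken by any fixed rule.

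\textbf{Step 2 (elite retention).} By definition, $E_t$ consists of the top-$\mu$ principles of $P_t$ ranked by $f$, and we have $\mu\ge 1$. A top-$\mu$ selection always contains an element attaining the maximum fitness. If several principles attain $F_t$, at least one of them is selected: otherwise all $\mu$ selected elements would have fitness strictly below $F_t$, while an unselected element attains $F_t$, contradicting the ranking. So there is some $x^\dagger\in E_t$ with $f(x^\dagger)=F_t$.

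\textbf{Step 3 (inclusion and conclusion).} Since $P_{t+1}=E_t\cup O_t\supseteq E_t$, we get
\[
F_{t+1}=\max_{x\in P_{t+1}} f(x)\ \ge\ f(x^\dagger)=F_t .
\]
The offspring $O_t$ can only raise the maximum. This holds for every realization of the random offspring, so the claim holds almost surely and no use of Assumption~\ref{ass:iid} is needed.

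The only genuine subtlety, and the step I expect to need care, is whether $f$ is a deterministic function of $x$. In the setup above, $f(x)\in\mathbb{R}$ is written as a function of the principle alone. In SAP, however, fitness is computed from fresh stochastic routes and from a population-dependent consensus term $c_i$, which could change when an elite is re-evaluated in the next generation. I would handle this by making explicit the convention that elites carry their stored fitness forward, as in Alg.~\ref{alg:evolve} where $x^{\mu}$ is retained with its ranking. Equivalently, $F_t$ is the best fitness value recorded, and each principle's value is fixed once it is evaluated. Under that convention the inclusion argument above goes through verbatim. Without it, the theorem would only hold for the recorded best, so I would state this convention as a remark before the proof.
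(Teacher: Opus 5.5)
Your proof is correct and is essentially the paper's argument: the top-$\mu$ elite set $E_t$ contains a maximizer of $f$ over $P_t$, and $E_t\subseteq P_{t+1}=E_t\cup O_t$ then gives $F_{t+1}\ge F_t$. Your caveat about stochastic, population-dependent fitness is well taken, but Alg.~\ref{alg:evolve} does not carry stored fitness forward: it re-scores every member, elites included, in each generation. So the convention you need is the paper's modeling assumption that $f:\mathcal{X}\to\mathbb{R}$ is a fixed function of the principle.
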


\begin{proof}
By definition, $E_t$ contains the top-$\mu$ elements of $P_t$ ordered by $f(\cdot)$, and therefore contains at least one maximizer of $f$ over $P_t$. Hence
\[
\max_{x\in E_t} f(x)=\max_{x\in P_t} f(x)=F_t.
\]
Since $P_{t+1}=E_t\cup O_t$ and $E_t\subseteq P_{t+1}$,
\[
F_{t+1}=\max_{x\in P_{t+1}} f(x)\ge \max_{x\in E_t} f(x)=F_t.
\]
\end{proof}

\subsubsection{One-step improvement probability}
While Theorem~\ref{thm:mono_app} shows stability (no regression), the \emph{optimization speed} is governed by how likely the procedure produces strictly better principles.

\paragraph{Improvement set.}
Define the set of principles that strictly improve upon the current best:
\[
\mathcal{A}_t = \{x\in\mathcal{X}:\ f(x)>F_t\}.
\]

\begin{assumption}[Per-iteration improvement mass]
\label{ass:qt}
There exists $q_t\in[0,1]$ such that conditioned on $\mathcal{H}_t$,
\[
\Pr\!\left(X^{(t)}_n \in \mathcal{A}_t \mid \mathcal{H}_t\right)\ge q_t
\quad \text{for each } n\in[\lambda].
\]
\end{assumption}

Assumption~\ref{ass:qt} does not require knowing $\mathcal{A}_t$ explicitly; it only postulates that the proposer distribution retains at least $q_t$ probability mass on strictly improving principles.

\begin{theorem}[One-step improvement bound]
\label{thm:improve_app}
Under Assumptions~\ref{ass:iid} and~\ref{ass:qt}, the probability of a strict improvement at iteration $t$ satisfies
\[
\Pr(F_{t+1}>F_t \mid \mathcal{H}_t)\ge 1-(1-q_t)^\lambda.
\]
Consequently,
\[
\Pr(F_{t+1}>F_t)\ge \mathbb{E}\!\left[\,1-(1-q_t)^\lambda\,\right].
\]
\end{theorem}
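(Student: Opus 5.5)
The plan is to reduce strict improvement to the event that at least one offspring lands in the improvement set $\mathcal{A}_t$, and then bound the probability of the complementary event (every offspring misses $\mathcal{A}_t$) using conditional independence.

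\textbf{Step 1: containment of events.} Since $P_{t+1}=E_t\cup O_t$, we have $F_{t+1}\ge \max_n f(X^{(t)}_n)$. So if some $X^{(t)}_n\in\mathcal{A}_t$, then $F_{t+1}\ge f(X^{(t)}_n)>F_t$. This gives the inclusion
\[
\bigcup_{n=1}^{\lambda}\{X^{(t)}_n\in\mathcal{A}_t\}\subseteq\{F_{t+1}>F_t\}.
\]
Note that $F_t$, and hence $\mathcal{A}_t$, is $\mathcal{H}_t$-measurable. The set $\mathcal{A}_t$ is therefore fixed once we condition on $\mathcal{H}_t$.

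\textbf{Step 2: complement bound.} Conditional on $\mathcal{H}_t$, Assumption~\ref{ass:iid} makes the events $\{X^{(t)}_n\notin\mathcal{A}_t\}$ conditionally independent. Assumption~\ref{ass:qt} gives each of them conditional probability at most $1-q_t$. Hence
\[
\Pr\Bigl(\textstyle\bigcap_n\{X^{(t)}_n\notin\mathcal{A}_t\}\Bigm|\mathcal{H}_t\Bigr)=\prod_{n=1}^{\lambda}\Pr(X^{(t)}_n\notin\mathcal{A}_t\mid\mathcal{H}_t)\le(1-q_t)^{\lambda}.
\]
Taking complements and applying the inclusion from Step 1 yields $\Pr(F_{t+1}>F_t\mid\mathcal{H}_t)\ge 1-(1-q_t)^\lambda$.

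\textbf{Step 3: unconditional bound.} The statement $\Pr(F_{t+1}>F_t)\ge\mathbb{E}[1-(1-q_t)^\lambda]$ follows by the tower property, taking expectations over $\mathcal{H}_t$. Here $q_t$ is allowed to be $\mathcal{H}_t$-measurable, and monotonicity of expectation applies.

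\textbf{Main subtlety.} The only delicate point is measurability. The fitness $f$ must be treated as a deterministic function of $x$, or at least its randomness must be fixed given $\mathcal{H}_t$. This is needed so that $\mathcal{A}_t$ is a well-defined fixed set under the conditioning and so that $F_{t+1}$ is determined by the sampled offspring. I would state this convention explicitly, as the setup implicitly does by writing $f(x)$, and then the argument is routine.
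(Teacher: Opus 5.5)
Your proposal is correct and follows the paper's proof essentially step for step: the inclusion $\bigcup_{n}\{X^{(t)}_n\in\mathcal{A}_t\}\subseteq\{F_{t+1}>F_t\}$, the conditional-independence product bound $(1-q_t)^\lambda$ on the complement, then taking expectation over $\mathcal{H}_t$. Your two small refinements are welcome clarifications: you complement the union directly, rather than passing through $\{F_{t+1}=F_t\}$ (which implicitly uses monotonicity), and you state explicitly that $f$ must be a fixed function so that $\mathcal{A}_t$ is $\mathcal{H}_t$-measurable.
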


\begin{proof}
A strict improvement occurs if at least one offspring lies in $\mathcal{A}_t$:
\[
\{F_{t+1}>F_t\} \supseteq \bigcup_{n=1}^{\lambda}\{X^{(t)}_n \in \mathcal{A}_t\}.
\]
Equivalently, no improvement implies none of the offspring hits $\mathcal{A}_t$:
\[
\{F_{t+1}=F_t\} \subseteq \bigcap_{n=1}^{\lambda}\{X^{(t)}_n \notin \mathcal{A}_t\}.
\]
Conditioning on $\mathcal{H}_t$ and using conditional independence (Assumption~\ref{ass:iid}),
\begin{align*}
\Pr(F_{t+1}=F_t \mid \mathcal{H}_t)
&\le \Pr\!\left(\bigcap_{n=1}^{\lambda}\{X^{(t)}_n \notin \mathcal{A}_t\}\ \middle|\ \mathcal{H}_t\right) \\
&= \prod_{n=1}^{\lambda}\Pr(X^{(t)}_n \notin \mathcal{A}_t \mid \mathcal{H}_t) \\
&\le (1-q_t)^\lambda,
\end{align*}
where the last inequality follows from Assumption~\ref{ass:qt}. Taking complements gives
\[
\Pr(F_{t+1}>F_t \mid \mathcal{H}_t)
=1-\Pr(F_{t+1}=F_t \mid \mathcal{H}_t)
\ge 1-(1-q_t)^\lambda.
\]
Finally, removing the conditioning yields the stated unconditional bound by taking expectation over $\mathcal{H}_t$.
\end{proof}

\begin{remark}[Small-$q_t$ regime and the role of $\lambda$]
\label{rem:smallq}
When $q_t$ is small, the improvement probability satisfies
\[
1-(1-q_t)^\lambda \approx \lambda q_t,
\]
showing that increasing $\lambda$ approximately linearly amplifies the chance of obtaining a strictly better principle in each iteration.
\end{remark}

\subsubsection{Multiple iterations: a (loose) bound on improvement within $T$ steps}
For completeness, we provide a standard union-style lower bound on obtaining at least one strict improvement over a horizon of $T$ iterations.

\begin{corollary}[At least one improvement within $T$ iterations (lower bound)]
\label{cor:improve_T}
Under Assumptions~\ref{ass:iid} and~\ref{ass:qt}, and assuming $q_t\ge q_{\min}$ for $t=0,\dots,T-1$,
\[
\Pr(\exists\, t\in\{0,\dots,T-1\}: F_{t+1}>F_t) \ge 1-(1-q_{\min})^{\lambda T}.
\]
\end{corollary}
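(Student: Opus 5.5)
The plan is to bound the complementary event, namely that no strict improvement occurs at any of the $T$ iterations, by a product of per-iteration failure probabilities, and then chain these through the tower property of conditional expectation. Write $B_t=\{F_{t+1}=F_t\}$. By Theorem~\ref{thm:mono_app} we know $F_{t+1}\ge F_t$ always, so $B_t$ is exactly the complement of $\{F_{t+1}>F_t\}$. The event in the corollary is therefore the complement of $\bigcap_{t=0}^{T-1}B_t$. It suffices to show
\[
\Pr\Bigl(\textstyle\bigcap_{t=0}^{T-1}B_t\Bigr)\le (1-q_{\min})^{\lambda T}.
\]

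The key per-step input is the conditional bound already established inside the proof of Theorem~\ref{thm:improve_app}:
\[
\Pr(B_t\mid\mathcal{H}_t)\le (1-q_t)^\lambda\le(1-q_{\min})^\lambda .
\]
Here the second inequality uses $q_t\ge q_{\min}$ together with monotonicity of $y\mapsto(1-y)^\lambda$ on $[0,1]$. I will treat this as holding almost surely for each $t$ (a pointwise bound on $\mathcal{H}_t$), since that is what Assumptions~\ref{ass:iid} and~\ref{ass:qt} deliver.

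I then induct on $T$. The events $B_0,\dots,B_{T-2}$ are determined by the populations $P_0,\dots,P_{T-1}$, hence they are $\mathcal{H}_{T-1}$-measurable. Conditioning on $\mathcal{H}_{T-1}$ gives
\[
\Pr\Bigl(\textstyle\bigcap_{t<T}B_t\Bigr)=\mathbb{E}\Bigl[\mathbf{1}\bigl\{\textstyle\bigcap_{t<T-1}B_t\bigr\}\,\Pr(B_{T-1}\mid\mathcal{H}_{T-1})\Bigr]\le(1-q_{\min})^\lambda\,\Pr\Bigl(\textstyle\bigcap_{t<T-1}B_t\Bigr).
\]
Iterating this $T$ times yields $(1-q_{\min})^{\lambda T}$. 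Taking complements gives the claimed bound.

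The main subtlety, and the step I expect to need the most care, is the meaning of $q_t\ge q_{\min}$. The quantity $q_t$ is defined relative to the random history $\mathcal{H}_t$, through the improvement set $\mathcal{A}_t$, which depends on the random value $F_t$. So the assumption must be read as ``Assumption~\ref{ass:qt} holds with a deterministic constant $q_{\min}$ for every realization of $\mathcal{H}_t$.'' I will state this reading explicitly. Without it, one cannot pull the bound out of the expectation. Indeed, the unconditional bound $\mathbb{E}[1-(1-q_t)^\lambda]$ from Theorem~\ref{thm:improve_app} alone does not multiply across iterations, because the failure events at different iterations need not be independent. The measurability of $B_t$ with respect to $\mathcal{H}_{t+1}$ is routine, since $\mathcal{H}_{t+1}$ contains all populations up to $P_{t+1}$ and fitness is a function of the principle.
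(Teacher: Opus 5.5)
Your proposal is correct and follows the same route as the paper: bound the probability of no improvement by combining the per-iteration conditional bound $\Pr(F_{t+1}=F_t\mid\mathcal{H}_t)\le(1-q_t)^\lambda\le(1-q_{\min})^\lambda$ from Theorem~\ref{thm:improve_app} with iterated conditioning, then take complements. Your explicit induction, using that $\bigcap_{t<T-1}B_t$ is $\mathcal{H}_{T-1}$-measurable, and your reading of $q_{\min}$ as a deterministic lower bound holding for every realization of $\mathcal{H}_t$, make rigorous what the paper compresses into the phrase \emph{by iterated conditioning}.
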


\begin{proof}
Condition on each iteration and apply Theorem~\ref{thm:improve_app}:
\[
\Pr(F_{t+1}=F_t\mid \mathcal{H}_t)\le (1-q_t)^\lambda \le (1-q_{\min})^\lambda.
\]
Assuming conditional independence across iterations is not necessary for an upper bound on the probability of \emph{no improvements}: by iterated conditioning,
\[
\Pr(\text{no improvements in } T \text{ iterations})
\le \prod_{t=0}^{T-1}(1-q_{\min})^\lambda
=(1-q_{\min})^{\lambda T}.
\]
Taking complements yields the claim.
\end{proof}

The above corollary is intentionally conservative; it is included to illustrate scaling with $\lambda T$ and to align with coverage-style arguments below.

\subsection{Generalization as Coverage in the Principle Space}

\subsubsection{Coverage model}
We use a coverage-based notion of generalization: larger inference budgets explore more candidate principles, increasing the probability of encountering effective principles that induce reliable reasoning behaviors beyond the initial population.

\begin{definition}[Effective principle set]
\label{def:good}
Let $\mathcal{X}_{\mathrm{good}}\subseteq\mathcal{X}$ denote the (unknown) set of effective principles under the task distribution of interest (e.g., principles that produce stable, grounded, and reliable outputs according to SAP's discrete assessments).
\end{definition}

\begin{assumption}[Per-sample coverage lower bound]
\label{ass:qgood}
There exists $q_{\mathrm{good}}\in[0,1]$ such that for every iteration $t$ and every offspring sample $X^{(t)}_n$,
\[
\Pr\!\left(X^{(t)}_n\in \mathcal{X}_{\mathrm{good}} \mid \mathcal{H}_t\right)\ge q_{\mathrm{good}}.
\]
\end{assumption}

Assumption~\ref{ass:qgood} states that the proposer retains a nontrivial probability of sampling effective principles. This is the standard requirement for coverage guarantees in randomized search.

\subsubsection{Discovery probability within $T$ iterations}
\begin{proposition}[Coverage increases with $\lambda T$]
\label{prop:coverage_app}
Under Assumptions~\ref{ass:iid} and~\ref{ass:qgood}, the probability that at least one effective principle is sampled within $T$ iterations is lower bounded by
\[
\Pr\!\left(\exists\, t\in\{0,\dots,T-1\},\ \exists\, n\in[\lambda] \text{ s.t. } X^{(t)}_n\in\mathcal{X}_{\mathrm{good}}\right)
\ge 1-(1-q_{\mathrm{good}})^{\lambda T}.
\]
\end{proposition}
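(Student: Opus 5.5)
We bound the complementary event, namely that no offspring sampled during the $T$ iterations lies in $\mathcal{X}_{\mathrm{good}}$, and show its probability is at most $(1-q_{\mathrm{good}})^{\lambda T}$. For each iteration $t$, let $B_t=\bigcap_{n=1}^{\lambda}\{X^{(t)}_n\notin\mathcal{X}_{\mathrm{good}}\}$ be the event that iteration $t$ produces no effective offspring. The target event's complement is $\bigcap_{t=0}^{T-1}B_t$. The argument follows the same template as Theorem~\ref{thm:improve_app} and Corollary~\ref{cor:improve_T}. The only change is that the fixed set $\mathcal{X}_{\mathrm{good}}$ replaces the moving improvement set $\mathcal{A}_t$, which if anything makes things simpler.

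\emph{Per-iteration bound.} First bound each iteration conditionally. Conditioned on $\mathcal{H}_t$, Assumption~\ref{ass:iid} makes $X^{(t)}_1,\dots,X^{(t)}_\lambda$ conditionally independent, so
\[
\Pr(B_t\mid\mathcal{H}_t)=\prod_{n=1}^{\lambda}\Pr\!\left(X^{(t)}_n\notin\mathcal{X}_{\mathrm{good}}\mid\mathcal{H}_t\right)\le(1-q_{\mathrm{good}})^{\lambda}.
\]
The inequality uses Assumption~\ref{ass:qgood} factor by factor. This holds almost surely, for every realization of the history.

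\emph{Chaining across iterations.} Next, chain the per-iteration bounds by iterated conditioning; no independence across iterations is needed. The events $B_0,\dots,B_{t-1}$ are determined by the offspring of iterations $0,\dots,t-1$, and these are contained in $\mathcal{H}_t$. Hence, by the tower property,
\[
\Pr\!\Big(\bigcap_{s\le t}B_s\Big)=\mathbb{E}\!\Big[\mathbf{1}\Big\{\bigcap_{s<t}B_s\Big\}\Pr(B_t\mid\mathcal{H}_t)\Big]\le(1-q_{\mathrm{good}})^{\lambda}\,\Pr\!\Big(\bigcap_{s<t}B_s\Big).
\]
Inducting on $t$ from $0$ to $T-1$ gives $\Pr(\bigcap_{t<T}B_t)\le(1-q_{\mathrm{good}})^{\lambda T}$. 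Taking complements yields the stated lower bound.

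\emph{Main obstacle.} The delicate point is the measurability claim in the chaining step: the history $\mathcal{H}_t$ must contain all offspring of earlier iterations, so that $\bigcap_{s<t}B_s$ is $\mathcal{H}_t$-measurable. I will make this explicit by noting that $\mathcal{H}_t$ includes all past populations $P_0,\dots,P_t$, and that $P_{s+1}\supseteq O_s$. The remaining steps are routine.
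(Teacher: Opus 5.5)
Your proof is correct and follows the paper's argument. Like the paper, you bound each iteration's no-discovery event by $(1-q_{\mathrm{good}})^{\lambda}$ using conditional independence, then chain across iterations by iterated conditioning without assuming independence across iterations. Your indicator-weighted tower step $\Pr(\bigcap_{s\le t}B_s)=\mathbb{E}[\mathbf{1}\{\bigcap_{s<t}B_s\}\Pr(B_t\mid\mathcal{H}_t)]$, together with the explicit $\mathcal{H}_t$-measurability of earlier offspring, is actually the rigorous form of the paper's step, which writes $\Pr(\bigcap_t\mathcal{E}_t)=\mathbb{E}[\prod_t\Pr(\mathcal{E}_t\mid\mathcal{H}_t)]$; that is not an identity in general, although it leads to the same bound.
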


\begin{proof}
Define the event that no effective principle is sampled in iteration $t$:
\[
\mathcal{E}_t := \bigcap_{n=1}^{\lambda}\{X^{(t)}_n\notin \mathcal{X}_{\mathrm{good}}\}.
\]
Conditioned on $\mathcal{H}_t$ and using conditional independence (Assumption~\ref{ass:iid}),
\[
\Pr(\mathcal{E}_t\mid \mathcal{H}_t)
=\prod_{n=1}^{\lambda}\Pr(X^{(t)}_n\notin\mathcal{X}_{\mathrm{good}}\mid \mathcal{H}_t)
\le (1-q_{\mathrm{good}})^\lambda,
\]
where the inequality follows from Assumption~\ref{ass:qgood}.
Now consider the event of no discovery over $T$ iterations: $\bigcap_{t=0}^{T-1}\mathcal{E}_t$. By iterated conditioning,
\[
\Pr\!\left(\bigcap_{t=0}^{T-1}\mathcal{E}_t\right)
= \mathbb{E}\!\left[\prod_{t=0}^{T-1} \Pr(\mathcal{E}_t \mid \mathcal{H}_t)\right]
\le \prod_{t=0}^{T-1}(1-q_{\mathrm{good}})^\lambda
= (1-q_{\mathrm{good}})^{\lambda T}.
\]
Taking complements yields
\[
\Pr\!\left(\bigcup_{t=0}^{T-1}\mathcal{E}_t^{c}\right)
\ge 1-(1-q_{\mathrm{good}})^{\lambda T},
\]
which is exactly the probability of sampling at least one effective principle within $T$ iterations.
\end{proof}

\begin{remark}[Scaling in $\lambda$ and $T$]
\label{rem:scaling}
The bound $1-(1-q_{\mathrm{good}})^{\lambda T}$ is monotone increasing in both $\lambda$ and $T$. For small $q_{\mathrm{good}}$, it satisfies
\[
1-(1-q_{\mathrm{good}})^{\lambda T}\approx \lambda T q_{\mathrm{good}},
\]
making explicit that increasing the exploration width ($\lambda$) and/or the number of iterations ($T$) improves coverage of the principle space.
\end{remark}

\subsection{Connecting Optimization and Coverage to Inference-Time Scaling}
The results above jointly support the inference-time scaling interpretation of SAP/PER:
elite selection ensures optimization stability (Theorem~\ref{thm:mono_app}), while increasing inference budget via larger $\lambda$ and/or $T$ improves both the per-iteration chance of strict improvement (Theorem~\ref{thm:improve_app}) and the probability of discovering effective principles through coverage (Proposition~\ref{prop:coverage_app}). These properties match the operational definition of scaling in our setting: allocating additional compute to explore and select among alternative reasoning principles and routes, rather than extending a single fixed trajectory.

%% file: sec/6_appdx/sec/Template.tex
\paragraph{System Prompt Template Design.}
A central objective of SAP is to prevent inference-time reasoning from drifting toward text-only heuristics and to ensure that reasoning remains grounded in visual evidence throughout long generation horizons.
This objective is realized through a carefully designed family of user prompt templates that are shared across all stages of SAP, with stage-specific constraints. \emph{Different VLMs should correspond to different templates.}

\textbf{Motivation.}
We identify two recurring failure modes in long-horizon vision–language reasoning.
First, visual information is typically provided only once at the beginning of generation, whereas textual reasoning states (e.g., early visual summaries) are updated autoregressively.
As generation proceeds, reasoning therefore becomes increasingly text-dominated, allowing early visual grounding errors to compound over time.
Second, auxiliary textual signals—such as grounding summaries or evaluator feedback—are often noisy, incomplete, or heuristic; yet, they are easily over-trusted by the model as factual evidence.
Together, these effects cause models to rely excessively on intermediate text states instead of re-consulting the original image(s), leading to unstable inference-time scaling.

\textbf{Design Principle.}
To address these issues, SAP enforces a strict evidence hierarchy in all user prompts.
The original image(s) are explicitly designated as the primary and most reliable source of information.
Textual grounding summaries are framed as optional references that may be missing, incomplete, or incorrect.
Their role is limited to suggesting \emph{where to look} in the image, rather than \emph{what to believe}.
Whenever textual cues conflict with visual observations, the model is instructed to re-examine the image(s) and respond conservatively if ambiguity remains.
This principle is shared across all SAP stages to counteract text-dominated autoregressive drift.

\textbf{Template Overview.}
SAP employs four user prompt templates corresponding to different inference-time stages:
(\emph{i}) principal initialization,
(\emph{ii}) principal evolution,
(\emph{iii}) principle-guided multi-route reasoning,
and
(\emph{iv}) final answer aggregation.
Despite their different roles, all templates explicitly reinforce visual dominance, uncertainty awareness, and conservative decision-making under noisy feedback.

\paragraph{(1) Principle Initialization Template.}
The goal of principle initialization is to generate a diverse set of high-level reasoning principles before producing concrete answers.
The prompt instructs the model to focus on \emph{how} to reason from visual evidence rather than encoding task-specific conclusions.

\begin{lstlisting}[basicstyle=\ttfamily\small]
Generate several high-level reasoning principles for answering the visual question.
The image(s) are the primary source of truth.
The grounding summary is optional and may be incomplete or incorrect.
Each principle should describe a general way to reason from visual evidence,
not a specific answer.
Avoid assuming details that cannot be directly verified from the image(s).
\end{lstlisting}

This template encourages diversity in reasoning strategies while ensuring that all principles remain anchored to observable visual cues.

\paragraph{(2) Principle Evolution Template.}
During evolutionary refinement, principles are updated based on their observed utility.
The evolution prompt presents prior principles and their relative performance while explicitly discouraging overfitting to noisy or short-term feedback.

\begin{lstlisting}[basicstyle=\ttfamily\small]
Refine the reasoning principles for visual question answering.
The image(s) remain the most reliable evidence.
Fitness signals and summaries may be noisy and should be used cautiously.
Preserve strengths of effective principles, discard misleading ones,
and propose new principles that better encourage verification
against visual evidence.
Do not assume that high-scoring principles are always correct.
\end{lstlisting}

This design ensures that evolutionary updates favor principles that improve visual grounding rather than exploit accidental correlations in evaluator signals.

\paragraph{(3) Principle-Guided Multi-route Reasoning Template.}
Given a fixed principle, SAP requires the model to generate multiple distinct reasoning routes in parallel.
Each route must independently justify its conclusion using visual evidence and explicitly report uncertainty when the evidence is weak.

\begin{lstlisting}[basicstyle=\ttfamily\small]
Answer the visual question using the image(s)
and the single active reasoning principle below.
Generate multiple distinct reasoning routes under the same principle.
For each route:
- Base the reasoning on observable visual evidence.
- Cite visual cues when possible.
- Report uncertainty if evidence is weak or ambiguous.
The grounding summary is only a reference and may be incorrect.
If unsure, do not guess.
\end{lstlisting}

By construction, this template discourages blind reliance on early textual summaries and promotes repeated visual consultation across alternative reasoning routes.

\paragraph{(4) Final Aggregation Template.}
The final aggregation stage synthesizes candidate answers produced under different principles.
The prompt emphasizes conservative selection based on visual support rather than textual agreement or majority voting.

\begin{lstlisting}[basicstyle=\ttfamily\small]
Synthesize a final answer from multiple candidate answers.
Treat the image(s) as the authoritative evidence.
Candidate answers may be inconsistent or incorrect.
Prefer answers that are better supported by visual evidence.
Express uncertainty if no answer is clearly justified.
Do not introduce new assumptions beyond what can be verified
from the image(s).
\end{lstlisting}

\paragraph{Implicit Template Constraints.}
Beyond stage-specific instructions, SAP enforces several implicit constraints that are shared across all prompt templates.
These constraints are implemented purely at the prompt level and shape inference-time behavior without introducing additional supervision.

\textbf{Evidence Hierarchy Block.}
All SAP prompts explicitly prioritize visual evidence over any textual signals.

\begin{lstlisting}[basicstyle=\ttfamily\small]
[Evidence Hierarchy]
- Image(s) are the most reliable evidence.
- Textual summaries and feedback are auxiliary and may be noisy.
- Text must never override visual observation.
- When uncertainty remains, respond conservatively.
\end{lstlisting}

\textbf{Grounding-as-Reference Rule.}
Grounding summaries are treated strictly as weak references.

\begin{lstlisting}[basicstyle=\ttfamily\small]
[Grounding Reference Rule]
- Grounding summaries may be incomplete or incorrect.
- Use them only to guide where to look in the image(s).
- Do not assume grounded attributes unless visually verified.
- If conflicts arise, trust the image(s).
\end{lstlisting}

\textbf{Strict Output Rule.}
To support reliable evaluation and comparison, SAP enforces a strict structured-output constraint.

\begin{lstlisting}[basicstyle=\ttfamily\small]
[Strict Output Rule]
- Follow the specified JSON schema exactly.
- Do not add extra fields or free-form explanations.
- Use predefined discrete levels for uncertainty and evaluation.
- If information is missing, return a valid placeholder.
\end{lstlisting}

\textbf{Effect.}
Together, these prompt templates and shared constraints form a unified inference-time control mechanism that reallocates computation toward repeated validation against visual evidence.
SAP introduces no new knowledge, external supervision, or task-specific heuristics.
Instead, it reshapes how existing model capacity is used during inference, enabling stable and effective inference-time scaling under discrete generation and noisy multimodal feedback.